\newtheorem{definition}{Definition}[section]
\newtheorem{proposition}{Proposition}[section]
\newtheorem{theorem}{Theorem}[section]
\newtheorem{lemma}{Lemma}[section]
\newtheorem{example}{Example}
\newcommand{\cites}[1]{\citeauthor{#1} \shortcite{#1}}
\newcommand{\SV}{\mathsf{S5}}
\newcommand{\CNF}{\mathsf{CNF}}
\newcommand{\DNF}{\mathsf{DNF}}
\newcommand{\BDD}{\mathsf{BDD}}
\newcommand{\KDNF}{\mathsf{SDNF}}
\newcommand{\KFVDNF}{\mathsf{ASDNF}}
\newcommand{\KCNF}{\mathsf{SCNF}}
\newcommand{\KFVCNF}{\mathsf{ASCNF}}
\newcommand{\CDNF}{\mathsf{CDNF}}
\newcommand{\ACDNF}{\mathsf{ACDNF}}
\newcommand{\PINF}{\mathsf{PINF}}
\newcommand{\SAT}{\mathsf{SAT}}
\newcommand{\TE}{\mathsf{TE}}
\newcommand{\CL}{\mathsf{CL}}
\newcommand{\KTE}{\mathsf{STE}}
\newcommand{\KFVTE}{\mathsf{ASTE}}
\newcommand{\KCL}{\mathsf{SCL}}
\newcommand{\KFVCL}{\mathsf{ASCL}}
\newcommand{\propSub}{\mathsf{Prop}}
\newcommand{\Kn}{\mathsf{K_n}}
\newcommand{\KFVn}{\mathsf{K45_n}}
\newcommand{\KDFVn}{\mathsf{KD45_n}}
\newcommand{\SVn}{\mathsf{S5_n}}
\newcommand{\ALC}{\mathcal{ALC}}
\newcommand{\axiomF}{\mathbf{4}}
\newcommand{\axiomV}{\mathbf{5}}
\newcommand{\CO}{\mathbf{CO}}
\newcommand{\VA}{\mathbf{VA}}
\newcommand{\SE}{\mathbf{SE}}
\newcommand{\EQ}{\mathbf{EQ}}
\newcommand{\IM}{\mathbf{IM}}
\newcommand{\CE}{\mathbf{CE}}
\newcommand{\CT}{\mathbf{CT}}
\newcommand{\ME}{\mathbf{ME}}
\newcommand{\veeC}{\vee\mathbf{C}}
\newcommand{\veeBC}{\vee\mathbf{BC}}
\newcommand{\wedgeC}{\wedge\mathbf{C}}
\newcommand{\wedgeBC}{\wedge\mathbf{BC}}
\newcommand{\negC}{\neg\mathbf{C}}
\newcommand{\FO}{\mathbf{FO}}
\newcommand{\SFO}{\mathbf{SFO}}
\newcommand{\CD}{\mathbf{CD}}
\newcommand{\ACE}{\mathbf{ACE}}
\newcommand{\AIM}{\mathbf{AIM}}
\newcommand{\BoxSub}[1][]{B_{#1}}
\newcommand{\DiamSub}[1][]{D_{#1}}
\newcommand{\PropSub}{Prop}
\renewcommand{\P}{\mathbf{P}}
\newcommand{\NP}{\mathbf{NP}}
\newcommand{\coNP}{\mathbf{coNP}}
\newcommand{\PSPACE}{\mathbf{PSPACE}}
\newcommand{\Know}{\Box}
\newcommand{\CKnow}{\blacksquare}
\newcommand{\dualKnow}{\Diamond}
\newcommand{\cover}{\triangledown}
\newcommand{\lang}{\mathcal{L}}
\newcommand{\langkn}{\mathcal{L}_{\Know}}
\newcommand{\sublangprop}{\mathcal{L}_{0}}
\newcommand{\sublangpropP}{\mathcal{L}'_{0}}
\newcommand{\xmark}{\ding{53}}
\newcommand{\nmodels}{\not \models}
\newcommand{\nequiv}{\not \equiv}
\newcommand{\set}[1]{\{ {#1} \}}
\newcommand{\tuple}[1]{\langle {#1} \rangle}
\newcommand{\problem}{\mathcal{Q}}
\newcommand{\onticAct}{\mathcal{O}}
\newcommand{\epiAct}{\mathcal{E}}
\newcommand{\initialKB}{\mathcal{I}}
\newcommand{\goal}{\mathcal{G}}
\newcommand{\pre}{pre}
\newcommand{\eff}{e\!f\!f}
\newcommand{\pos}{pos}
\newcommand{\negation}{neg}
\newcommand{\dep}[1][]{\delta(#1)}
\newcommand{\len}[1][]{|#1|}
\newcommand{\prop}{P}
\newcommand{\subprop}{Q}
\newcommand{\agents}{\mathcal{A}}
\newcommand{\subagents}{\mathcal{B}}
\newcommand{\km}[1][]{\ifthenelse{\isempty{#1}} {\langle S,  R,  V \rangle} {\langle S_{#1}, R_{#1}, V_{#1} \rangle}}
\newcommand{\kmu}[1]{\ifthenelse{\equal{#1}{'}} {\langle S', R', V' \rangle} {\ifthenelse{\equal{#1}{''}} {\langle S'', R'', V'' \rangle} {\langle S^{#1}, R^{#1}, V^{#1} \rangle}}}
\newcommand{\kforget}[1][]{\mbox{kforget} \ifthenelse{\isempty{#1}} {} {(#1)}}
\newcommand{\bisimilar}[1][]{\underline{\leftrightarrow} \ifthenelse{\isempty{#1}} {} {_{#1}}}
\newcommand{\similar}[1][]{\sim \ifthenelse{\isempty{#1}} {} {_{#1}}}
\newcommand{\true}{\top}
\newcommand{\false}{\bot}
\newcommand{\ie}{{\it i.e.}}
\newcommand{\eg}{{\it e.g.}}
\renewcommand{\xmark}{\text{\ding{53}}}
\newcommand{\dense}{\addtolength{\itemsep}{-0.5mm}}
\title{Knowledge Compilation in Multi-Agent Epistemic Logics} 
\author{Liangda Fang$^{1}$ \hspace*{0.5cm}   Kewen Wang$^{2}$  \hspace*{0.5cm}  Zhe Wang$^{2}$  \hspace*{0.5cm} Ximing Wen$^{3}$\\
	$^{1}$Deptartment of Computer Science, Jinan University, China \\
	$^{2}$School of Information and Communication Technology, Griffith University, Australia \\
	$^{3}$Guangdong Institute of Public Administration, Guangzhou, China\\
	fangld@jnu.edu.cn, $\{$k.wang,zhe.wang$\}$@griffith.edu.au, wenxim@mail2.sysu.edu.cn\\}
\begin{document}
	
	\maketitle
	
\begin{abstract}
	\looseness=-1
	Epistemic logics are a primary formalism for multi-agent systems but major reasoning tasks in such epistemic logics are intractable, which impedes applications of multi-agent epistemic logics in automatic planning.
	Knowledge compilation provides a promising way of resolving the intractability by identifying expressive fragments of epistemic logics that are tractable for important reasoning tasks such as satisfiability and forgetting.
	The property of logical separability allows to decompose a formula into some of its subformulas and thus modular algorithms for various reasoning tasks can be developed. In this paper, by employing logical separability, we propose an approach to knowledge compilation for the logic $\Kn$ by defining a normal form $\KDNF$.  
	Among several novel results, we show that every epistemic formula can be equivalently compiled into a formula in $\KDNF$, major reasoning tasks in $\KDNF$ are tractable, and formulas in $\KDNF$ enjoy the logical separability.
	Our results shed some lights on modular approaches to knowledge compilation.
	Furthermore, we apply our results in the multi-agent epistemic planning.
	Finally, we extend the above result to the logic $\KFVn$ that is $\Kn$ extended by introspection axioms $\axiomF$ and $\axiomV$.
\end{abstract}
	
	\section{Introduction}
	
	\looseness=-1
	It is crucial for an intelligent agent system to be capable of representing and reasoning about high-order knowledge in the multi-agent setting.
	A general representative framework for these scenarios is multi-agent epistemic logics. 
	However, many reasoning tasks in such logics are intractable, \eg, the entailment problems for $\Kn$ and $\KFVn$ are $\PSPACE$-complete \cite{HalM1992}.
	
	\looseness=-1
	These intractability results impede applications of multi-agent epistemic logics, \eg, multi-agent epistemic planning (MAEP) \cite{KomG2015,MuiBFMMPS2015}.
	An MAEP consists of a set of agents, the initial knowledge base (KB) and the goal formula that are expressed in multi-agent epistemic logics, ontic actions that change the world and epistemic actions that modify the mental attitude of agents.
	Two types of reasoning tasks, that are essential to solving MAEP, involve progression and entailment check.
	Progression updates KBs according to the effects of actions while entailment check is needed to decide if the current KB entails the goal formula and the preconditions of actions.
	As mentioned in \cite{BieFM2010}, based on a normal form with efficient progression and entailment procedures, the whole planning process should also be effective.
	
	\looseness=-1
	Knowledge compilation is an effective approach to address the intractability problem \cite{DarM2002}.
	A basic idea is to identify a normal form such that it is a fragment of the given language and each KB can be equivalently transformed into a KB in the normal form.
	\cites{BieFM2010} proposed a normal form, called $\SV$-$\DNF$, for the single-agent $\SV$ that supports polytime bounded conjunction and forgetting.
	It is also applied in making the progression of actions tractable.
	However, many reasoning tasks of multi-agent epistemic logics, including forgetting and entailment check, is intractable in this normal form.
	Hence, it cannot be applied to the multi-agent case.
	
	\looseness=-1
	Some normal forms have been proposed for multi-agent epistemic logics.
	By using cover operators instead of standard epistemic operators, \cites{CateCMV2006} defined \textit{cover disjunctive normal forms (CDNFs)} for $\mathcal{ALC}$ that is a syntactic variant of $\Kn$.
	\cites{Bie2008} introduced \textit{prime implicate normal forms (PINFs)} for $\ALC$.
	The target languages for these two compilations are tractable w.r.t. major reasoning tasks such as entailment check and forgetting.
	The former supports bounded conjunction while the latter does not.
	In the worst case, a compiled formula from CDNF has the single exponential size w.r.t. the original formula, but PINF can cause double exponential blowup. 
	In addition, a normal form, called \textit{alternating cover disjunctive normal form (ACDNF)}, is proposed for the logic $\KDFVn$ \cite{HalFD2012}.
	This form prohibits direct nestings of cover operators of an agent inside those of the same agent.
	Recently, \cites{HuangFWL2017} proved that polytime bounded conjunction and satisfiability check hold for ACDNFs.

	
	\looseness=-1
	To develop effective algorithms to MAEPs, we aim to develop a compilation approach for multi-agent logics such that (1) the compilation is relatively compact.
	That is, the compiled formula has at most single exponential size; (2) the target language is tractable for major reasoning tasks of MAEP: bounded conjunction, forgetting and entailment check; and (3) each formula can be equivalently transformed into a formulas in the normal form.
	
	\looseness=-1
	In this paper, we provide such a solution to knowledge compilation for the multi-agent epistemic logics $\Kn$ and its extension $\KFVn$, with the well-known introspection axioms $\axiomF$ and $\axiomV$, by employing the theory of \emph{logical separability} \cite{Lev1998}.
	Informally, we say a conjunction $\phi$ of formulas is logically separable if reasoning can be reduced to its conjuncts.
	For example, the formula $\phi = (p \rightarrow q) \land (q \rightarrow r)$ is not logically separable since it logically implies a conjunct $p \rightarrow r$ that is not derived by any single conjunct of $\phi$.
	By conjoining $\phi$ with the implicit conjunct, the new formula becomes logically separable.
	
	\looseness=-1
	The main contributions of this paper are summarized as follows:
	\begin{enumerate}
		\item \looseness=-1 
		We first formulate the concept of logical separability for epistemic terms and introduce some useful properties that are desired for them.		
		Thanks to the notion of logical separability, we are able to define two novel normal forms for $\Kn$, referred to as $\KDNF$ and $\KCNF$ (Section 3). 
		
		
		\item \looseness=-1 We provide an almost complete knowledge compilation map for multi-agent epistemic logics by comparing among the four normal forms: $\KDNF$, $\KCNF$, $\CDNF$ and $\PINF$ from the four aspects: expressiveness, succinctness, queries and transformations. 
		To the best of our knowledge, we are the first to construct this map for multi-agent epistemic logics (Sections 4 and 5).

		\item \looseness=-1
		We offer a tractable approach to progression and entailment checking, which are important ingredients of MAEP. 
		To achieve this, we obtain a normal form $\KDNF_{\sublangprop}$ by taking advantage of tractability of normal forms in propositional logic, \eg, $\DNF$ and $\BDD$ \cite{Bry1986} on bounded conjunction, forgetting and entailment check are tractable (Section 6). 
		
		
		\item \looseness=-1 We extend the results of knowledge compilation for $\Kn$ to $\KFVn$ by requiring that no consecutive epistemic operators of the same agent appears in formulas (Section 7). 
	\end{enumerate}
	
	
	\section{The multi-agent modal logic $\Kn$}
	
	\looseness=-1
	In this section, we first recall the syntax and semantics of the multi-agent epistemic logic $\Kn$, and then introduce two normal forms of $\Kn$, and major reasoning tasks in $\Kn$.
	
	\subsubsection{Syntax and semantics}	
	Throughout this paper, we fix a set $\agents$ of $n$ agents and a countable set $\prop$ of variables.	
	
	\begin{definition} \rm
		The language $\langkn$ is generated by the BNF:		
		\[\phi ::= \true \mid p \mid \neg \phi \mid \phi \land \phi \mid \Know_i \phi , \]		
		where $p\in \prop$ and $i \in \agents$.
	\end{definition}
	
	\looseness=-1
	The formula $\Know_i \phi$ means that agent $i$ knows $\phi$.
	The symbols $\false$, $\lor$, $\rightarrow$, $\leftrightarrow$, and $\dualKnow_i$ are defined as usual.
	We use $i$ and $j$ for agents, $\subagents$ for sets of agents, $p$ and $q$ for variables, $\subprop$ for finite sets of variables.
	For an $\langkn$-formula $\phi$, we use $\len[\phi]$ for the size of $\phi$ (\ie, the number of occurrences of variables, logical connectives, and modalities in $\phi$), $\dep[\phi]$ for the depth of $\phi$ (\ie, the maximal number of nested epistemic operators appearing in $\phi$), and $\prop(\phi)$ for the set of variables appearing in $\phi$.
	We say a formula $\phi$ is \textit{smaller} than $\psi$, if $\len[\phi] < \len[\psi]$.
	
	\looseness=-1
	The notions of propositional literals, terms ($\TE$), clauses ($\CL$), disjunctive and conjunctive normal forms ($\DNF$ and $\CNF$) are defined as usual.
	An $\langkn$-formula is in \textit{negation normal form (NNF)} if the scope of $\neg$ contains only variables.
	A \textit{positive} (resp. \textit{negative}) \textit{epistemic literal} is a formula of the form $\Know_i \phi$ (resp. $\dualKnow_i \phi$).
	A formula is \textit{basic}, if it is a propositional formula or epistemic literal.
	An \textit{epistemic term} (resp. \textit{clause}) is a conjunction (resp. disjunction) of basic formulas.
	Sometimes, we treat an epistemic term or clause as a set of formulas.
	For an epistemic term (resp. clause) $\phi$, we use $\propSub(\phi)$ for the set of the maximal propositional formulas that are conjuncts (resp. disjuncts) of $\phi$, $\BoxSub[i](\phi)$ for the set of formulas $\psi$ such that $\Know_i \psi$ is a conjunct (resp. disjunct) of $\phi$, and $\DiamSub[i](\phi)$ for the set of formulas $\psi$ such that $\dualKnow_i \psi$ is a conjunct (resp. disjunct) of $\phi$.

	\begin{definition} \label{def:model}\rm		
		A Kripke model $M$ is a tuple  $\km$ where
		\begin{itemize} 
			\item $S$ is a non-empty set of possible worlds;
			\item $R = \set{R_i \mid i \in \agents}$ where $R_i$ is a binary relation on $S$; 
			\item $V$ is a function assigning to each $s \in S$ in a subset of $\prop$.
		\end{itemize}
		
		A pointed Kripke model is a pair $(M, s)$, where $M$ is a Kripke model and $s$ is a world of $M$, called the {\em actual world}.
		For convenience, we assume that Kripke models are pointed.
	\end{definition}
	
	\begin{definition} \label{def:langkcSem} \rm
		Let $(M, s)$ be a Kripke model where $M = \km$.
		We interpret formulas in $\langkn$ by induction:
		\begin{itemize} 
			\item $M, s \models \true$;
			\item $M, s \models p$ if $p \in V(s)$;
			\item $M, s \models \neg \phi$ if $M, s \nmodels \phi$;
			\item $M, s \models \phi \land \psi$ if $M, s \models \phi$ and $M, s \models \psi$;
			\item $M, s \models \Know_i \phi $ if for all $t \in R_i(s)$, $M, t\models \phi$.
		\end{itemize}
	\end{definition}
	
	\looseness=-1
	We say $\phi$ is \textit{satisfiable}, if there is a model satisfying $\phi$;
	$\phi$ \textit{entails} $\psi$, written $\phi \models \psi$, if for any model $(M, s)$ satisfying $\phi$, $M, s \models \psi$;
	$\phi$ and $\psi$ are \textit{equivalent}, written $\phi \equiv \psi$, if $\phi \models \psi$ and $\psi \models \phi$.
	
	\looseness=-1
	Throughout this paper, we use $\lang$ and $\lang'$ for sublanguages of $\langkn$, and $\sublangprop$ and $\sublangpropP$ for propositional sublanguages.
	Throughout this paper, we assume that every propositional term and clause has a polynomial representation in $\sublangprop$ and $\sublangpropP$.
	All of the propositional sublanguages considered in \cite{DarM2002} obey with this assumption except the canonical $\DNF$.
	We say $\lang$ and $\lang'$ are \textit{dual}, if there is a polytime algorithm $f$ from $\lang$ to $\lang'$ s.t. for any formula $\phi \in \lang$, $f(\phi) \equiv \neg \phi$, and vice verse.
	For example, $\DNF$ and $\CNF$ are dual in propositional logic.
	
	\subsubsection{Normal forms}
	Cover disjunctive normal form \cite{CateCMV2006} and prime implicate normal form \cite{Bie2008} have been proposed for the description logic $\ALC$ that is a syntactic variant of $\Kn$.
	We rephrase them in $\Kn$.
	
	\begin{definition} \rm
		A formula $\phi$ is in \textit{cover disjunctive normal form ($\CDNF$)}, if it is generated by the BNF:
		\[\phi ::= \tau \land \bigwedge_{i \in \subagents} \cover_i \Phi_i \mid \phi \lor \phi, \]
		where $\tau$ is a satisfiable $\TE$, $\Phi_i$ are in $\CDNF$, $\subagents \subseteq \agents$, and $\cover_i \Phi_i$ is shorthand for $\Know_i (\bigvee_{\phi \in \Phi_i} \phi) \land \bigwedge_{\phi \in \Phi_i} \dualKnow_i \phi$.
	\end{definition}
	
	An epistemic clause\footnote{The definition of epistemic clauses in \cite{Bie2008} is slightly different from that in this paper.
		It is defined as a disjunction of propositional literals and epistemic literals.} $c$ is an \textit{implicate} of $\phi$, if $\phi \models c$.
	An epistemic clause $c$ is a \textit{prime implicate} of $\phi$, if $c$ is an implicate of $\phi$ and for all implicate $c'$ of $\phi$ s.t. $c' \models c$, $c \models c'$.	
	
	\begin{definition}\label{def:pinf} \rm
		A formula $\phi$ is in \textit{prime implicate normal form ($\PINF$)}, if it is $\true$ or $\false$, or satisfies the following:
		\begin{enumerate} 
			\item $\phi \nequiv \true$ and $\phi \nequiv \false$;
			\item $\phi$ is a conjunction $c_1 \land \cdots \land c_n$ of epistemic clauses where
			\begin{enumerate} 
				\item $c_j \nmodels c_k$ for $j \neq k$;
				\item each prime implicate of $\phi$ is equivalent to some conjunct $c_j$;
				\item every $c_j$ is a prime implicate of $\phi$ s.t. (i) if $d$ is a disjunct of $c_j$, then $c_j \nequiv c_j \setminus \set{d}$; (ii) $|\DiamSub[i](c_j)| \leq 1$ for $i \in \agents$; (iii) for every $i \in \agents$, if $\beta \in \BoxSub[i](c_j) \cup \DiamSub[i](c_j)$ then $\beta$ is in $\PINF$; (iv) for every $i \in \agents$, $\beta \in \BoxSub[i](c_j)$ and $\gamma \in \DiamSub[i](c_j)$, we have $\gamma \models \beta$.
			\end{enumerate}
		\end{enumerate}
	\end{definition}

	\subsubsection{Queries and transformations}
	\looseness=-1
	For a normal form considered in knowledge compilation, it is useful if it preserves major reasoning tasks and logical constructs (also referred to as queries and transformations).
	In this paper, we consider those queries and transformations, discussed in \cite{DarM2002} for propositional logic.
	Most of them can be directly generalized to multi-agent epistemic logics except modal counting ($\CT$) and enumeration ($\ME$) since any formula generally has infinitely many distinct models.
	
	\begin{definition}	\rm
		We say a language $\lang$ satisfies 
		\begin{itemize} 
			\item $\CO$ (resp. $\VA$), if there is a polytime algorithm deciding whether any formula $\phi \in \lang$ is satisfiable (resp. valid).
			
			\item $\EQ$ (resp. $\SE$), if there is a polytime algorithm deciding whether any formulas $\phi, \psi \in \lang$ satisfies the condition $\phi \equiv \psi$ (resp. $\phi \models \psi$).
			
			\item $\CE$ (resp. $\IM$), if there is a polytime algorithm deciding whether $\phi \models \psi$ (resp. $\psi \models \phi$) for any formula $\phi \in \lang$ and epistemic clause (resp. term) $\psi$.
			
			\item $\wedgeC$ (resp. $\veeC$), if there is a polytime algorithm generating a formula of $\lang$ equivalent to $\phi_1 \land \cdots \land \phi_n$ (resp. $\phi_1 \lor \cdots \lor \phi_n$) for every set $\set{\phi_1, \ldots, \phi_n}$ of $\lang$-formulas.
			
			\item $\wedgeBC$ (resp. $\veeBC$), if there is a polytime algorithm generating a formula of $\lang$ equivalent to $\phi \land \psi$ (resp. $\phi \lor \psi$) for any formulas $\phi, \psi \in \lang$.
			
			\item $\negC$, if there is a polytime algorithm generating a formula of $\lang$ equivalent to $\neg \phi$ for any formula $\phi \in \lang$.
		\end{itemize}
	\end{definition}
	
	\looseness=-1
	We now turn to another two important transformations: conditioning and forgetting.
	Conditioning is a syntactic operation defined as follows:
	
	\begin{definition} \label{def:conditioning} \rm
		Let $\phi \in \langkn$ and $\tau$ a satisfiable propositional term.
		The \textit{conditioning} of $\phi$ on $\tau$, written $\phi|_{\tau}$, is the formula obtained by replacing each variable $x$ of $\phi$ by $\true$ (resp. $\false$) if $x$ (resp. $\neg x$) is a positive (resp. negative) literal of $\tau$.
	\end{definition}
	
	\begin{definition}\label{def:CD} \rm
		A language $\lang$ satisfies $\CD$, if there is a polytime algorithm generating a formula of $\lang$ equivalent to $\phi|_{\tau}$ for every $\phi \in \lang$ and satisfiable propositional term $\tau$.
	\end{definition}
	
	\looseness=-1
	Intuitively, forgetting $\subprop$ from $\phi$ generates the logically strongest consequence of $\phi$ in which any variable of $\subprop$ does not appear.
	It can be applied in version control of knowledge bases and knowledge reuse.
	The definition of forgetting \cite{Fre2006} is given as follow. 
	
	\begin{definition} \label{def:forgetting} \rm
		Let $\phi \in \langkn$ and $\subprop \subseteq \prop$.
		We say $\psi$ is \textit{a result of forgetting $\subprop$ in $\phi$}, written $\exists \subprop. \phi$, if
		\begin{enumerate} 
			\item $\phi \models \psi$;
			\item $\prop(\psi) \subseteq \prop \setminus \subprop$;
			\item for any formula $\eta$ s.t. $\prop(\eta) \subseteq \prop \setminus \subprop$, $\phi \models \eta$ iff $\psi \models \eta$.
		\end{enumerate}
	\end{definition}
	
	\looseness=-1
	The result of forgetting is unique up to logical equivalence \cite{FangLD2016}.
	We hereafter use $\exists \subprop. \phi$ to denote the result of forgetting $\subprop$ in $\phi$.
	
	\begin{definition}\label{def:FO} \rm
		A language $\lang$ satisfies $\FO$ (resp. $\SFO$), if there is a polytime algorithm generating a formula of $\lang$ equivalent to $\exists \subprop. \phi$ (resp. $\exists \set{p}. \phi$) for any formula $\phi \in \lang$ and set of variables $\subprop$ (resp. variable $p$).
	\end{definition}

	\section{Separability-based $\DNF$ and $\CNF$}
	\looseness=-1
	In this section, based on logical separability, we introduce a general framework for defining normal forms $\DNF$ and $\CNF$ in $\Kn$.
	
	\looseness=-1
	One might define $\DNF$ for $\Kn$ as a disjunction of epistemic terms.
	However, this is not a proper definition for $\Kn$ due to lack of some desirable properties, such as the tractability for both satisfiability check and forgetting that propositional $\DNF$ supports.
	The issues distribute over disjunction, and thus the problem lies in the definition of epistemic terms as some epistemic terms are logically inseparable.
	Let us illustrate it in an example.
	
	\begin{example} \label{exm:kn1}
		Consider the formula $\phi = \Know_i (p \lor q) \land \Know_i(\neg p \lor q) \land \dualKnow_i \neg q$.
		The unsatisfiable formula $\dualKnow_i \false$ is not derived by any single epistemic literal of $\phi$.
		Deriving it requires reasoning about all conjuncts together.
		The satisfiability problem of epistemic terms cannot be decomposed into its conjuncts.
	\end{example}
	
	\looseness=-1
	This example illustrates that the polytime check for satisfiability holds for only logically separable epistemic terms. 
	
	\begin{definition}\label{def:logSepTerm} \rm
		Let $\phi$ be an epistemic term.
		We say $\phi$ is \textit{logically separable}, iff for every basic formula $\eta$, if $\phi \models \eta$, then there is $\alpha \in \propSub(\phi)$ or $\alpha$ is an epistemic literal that is a conjunct of $\phi$ s.t. $\alpha \models \eta$.
	\end{definition}
	
	Intuitively, logical separability requires that no logical puzzles are hidden within parts of epistemic terms.
	
	\begin{example} \label{exm:kn2}
		Continued with Example \ref{exm:kn1}, $\phi$ is logically inseparable since $\phi \models \dualKnow_i \false$ but no conjunct of $\phi$ entails $\dualKnow_i \false$.
		The formula $\psi = \Know_i q \land \dualKnow_i \false$, which is equivalent to $\phi$, is logically separable.
	\end{example}
	
	\looseness=-1
	Logical separable terms have the modularity property for satisfiability check and forgetting.
	The satisfiability problem of a logically separable epistemic term $\phi$ can be reduced to satisfiability subproblems of deciding whether each formula in $\PropSub(\phi)$ and $\DiamSub[i](\phi)$ is satisfiable.
	
	\begin{proposition} \label{prop:logsepSat}
		Let $\phi$ be a logically separable epistemic term.
		Then $\phi$ is satisfiable iff every formula $\alpha \in \PropSub(\phi) \cup \bigcup_{i \in \agents} \DiamSub[i](\phi)$ is satisfiable.
	\end{proposition}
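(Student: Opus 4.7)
The plan is to prove the two directions separately, with the forward direction amounting to a direct semantic unpacking and the backward direction making essential use of logical separability. For $(\Rightarrow)$, I would assume that $(M,s)$ is a pointed Kripke model of $\phi$. Since $\phi$ is a conjunction of basic formulas, every propositional conjunct $\alpha \in \PropSub(\phi)$ is satisfied by $(M,s)$ and is therefore satisfiable. For every conjunct of the form $\dualKnow_i \beta$, the semantic clause for $\dualKnow_i$ supplies a world $t \in R_i(s)$ with $M, t \models \beta$, so every $\beta \in \DiamSub[i](\phi)$ is satisfiable as well. Note that this direction does not actually require logical separability.

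For $(\Leftarrow)$, I would argue by contraposition. Suppose $\phi$ is unsatisfiable, so that $\phi \models \false$. Since $\false$ is a propositional formula and hence a basic formula, logical separability of $\phi$ yields some $\alpha$ that is either a member of $\PropSub(\phi)$ or an epistemic literal conjunct of $\phi$ with $\alpha \models \false$, \ie, $\alpha$ is itself unsatisfiable. I would then close with a short case analysis on the shape of $\alpha$: if $\alpha \in \PropSub(\phi)$, its unsatisfiability directly contradicts the hypothesis; if $\alpha = \Know_i \beta$, then $\alpha$ is satisfied by any pointed Kripke model in which $R_i(s) = \emptyset$, so this case cannot occur; and if $\alpha = \dualKnow_i \beta$, then $\beta \in \DiamSub[i](\phi)$, and $\dualKnow_i \beta$ is unsatisfiable precisely when $\beta$ is, again contradicting the hypothesis.

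The only real subtlety is noticing that the right witnessing basic formula for invoking the separability condition is $\false$; once that choice is made, the remaining argument reduces to the elementary observations that a $\Know_i$-formula is always satisfiable (via a model with no $i$-successor at the actual world) and that a $\dualKnow_i$-formula is unsatisfiable exactly when its body is. I do not anticipate any further technical obstacle.
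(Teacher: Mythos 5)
Your proof is correct. The paper states Proposition~\ref{prop:logsepSat} without giving a proof, so there is nothing to compare against, but your argument is the natural one: the forward direction is a direct semantic unpacking (and indeed does not need separability), and the backward direction correctly instantiates the basic formula $\eta$ in Definition~\ref{def:logSepTerm} as $\false$, then disposes of the $\Know_i\beta$ case via a model with no $i$-successor (valid in $\Kn$, which imposes no seriality) and reduces the $\dualKnow_i\beta$ case to unsatisfiability of $\beta$. No gaps.
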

	
	\looseness=-1
	Similarly, forgetting a set $Q$ of variables in $\phi$ can be accomplished by individually forgetting $Q$ in each formula of $\PropSub(\phi)$, $\BoxSub[i](\phi)$ and $\DiamSub[i](\phi)$.
	\begin{proposition} \label{prop:logsepForgetting}
		Let $\phi$ be a logically separable epistemic term and $Q$ a set of variables.
		Then 	\\
		$\exists Q. \phi \equiv \bigwedge_{\alpha \in \PropSub(\phi)} (\exists Q. \alpha) \land$ \\
		\hspace*{7.5mm} $\bigwedge_{i \in \subagents} [\bigwedge_{\beta \in \BoxSub[i](\phi)} (\Know_i (\exists Q.  \beta)) \land \bigwedge_{\gamma \in \DiamSub[i](\phi)} (\dualKnow_i (\exists Q. \gamma))]$.
	\end{proposition}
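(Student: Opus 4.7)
The plan is to show that the right-hand side, call it $\psi$, satisfies the three defining conditions of $\exists Q. \phi$ from Definition 2.9; uniqueness of the result of forgetting up to equivalence then yields the claimed identity.

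The first two conditions are routine. To see $\phi \models \psi$, I would check each conjunct of $\phi$ in turn: a propositional $\alpha$ entails $\exists Q. \alpha$ by definition of forgetting, while $\Know_i \beta \models \Know_i(\exists Q. \beta)$ and $\dualKnow_i \gamma \models \dualKnow_i(\exists Q. \gamma)$ follow from $\beta \models \exists Q. \beta$, $\gamma \models \exists Q. \gamma$ together with the monotonicity of $\Know_i$ and $\dualKnow_i$. That $\prop(\psi) \subseteq \prop \setminus Q$ is immediate from the outermost $\exists Q.$ in each component of the right-hand side.

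The substantive work is the third condition: for every $Q$-free $\eta$ with $\phi \models \eta$, show $\psi \models \eta$. I would argue semantically. Given any $(M, s) \models \psi$, I will construct a valuation $V'$ that agrees with $V$ on $\prop \setminus Q$ at every world, so that $(\langle S, R, V' \rangle, s) \models \phi$. Since $\eta$ is $Q$-free, its truth is invariant under such a swap, and $\phi \models \eta$ then transports back to $(M, s) \models \eta$. Two ideas drive the construction. First, I would replace $M$ by its tree unraveling at $s$; bisimulation invariance of $\langkn$ preserves $\psi$ now and $\phi$ once $V'$ is built, while on a tree the subtrees of distinct successors of $s$ are pairwise disjoint, so modifications of $V'$ on separate branches cannot collide. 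Second, I would use the semantic reading of forgetting, namely that $(M, t) \models \exists Q. \chi$ iff some valuation agreeing with $V$ on $\prop \setminus Q$ makes $\chi$ true at $t$, to extract the required per-world modifications from the individual conjuncts of $\psi$.

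The main obstacle is that the per-conjunct witnesses supplied by $\psi$ need not be jointly realizable in general: $\bigwedge_\alpha \exists Q. \alpha$ is strictly weaker than $\exists Q. \bigwedge_\alpha \alpha$, and analogously for the modal components. This is exactly where logical separability of $\phi$ intervenes. Applying Definition 3.1 to the basic consequence $\bigwedge_\alpha \alpha$ of $\phi$ forces some single $\alpha^* \in \PropSub(\phi)$ to entail all other propositional conjuncts (or they are collectively valid); applying it to $\Know_i(\bigwedge_\beta \beta)$ yields a strongest $\beta^*_i \in \BoxSub[i](\phi)$; and applying it to $\dualKnow_i(\beta^*_i \land \gamma)$ for each $\gamma \in \DiamSub[i](\phi)$ produces a $\gamma' \in \DiamSub[i](\phi)$ with $\gamma' \models \beta^*_i \land \gamma$, so that a single successor world can simultaneously witness $\dualKnow_i \gamma$ and cohere with $\Know_i \beta^*_i$. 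These three reductions reconcile the individual forgettings in $\psi$ with the full joint constraint of $\phi$, and the per-subtree modifications on the unraveled tree combine into the desired $V'$, completing the proof of the third condition.
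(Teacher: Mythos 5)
Your overall strategy---verify the three conditions of Definition 2.9 for the right-hand side $\psi$ and invoke uniqueness of forgetting---is sound, and your use of logical separability to collapse the per-conjunct constraints to a single strongest $\alpha^*$, $\beta^*_i$ and $\gamma'$ is essentially the content of the paper's Lemma 3.1. The gap is in the engine of your semantic argument for Condition 3: the claimed ``semantic reading of forgetting,'' that $(M,t)\models\exists Q.\chi$ iff some valuation agreeing with $V$ outside $Q$ makes $\chi$ true at $t$, is the \emph{propositional} characterization and is false in $\Kn$. Forgetting in $\Kn$ is a bisimulation quantifier, not a valuation quantifier. Concretely, take $\chi=\dualKnow_i(p\land q)\land\dualKnow_i(p\land\neg q)$ and $Q=\{q\}$: then $\exists Q.\chi\equiv\dualKnow_i p$, and a model whose relevant world has a single successor satisfying $p$ satisfies $\exists Q.\chi$, yet no re-valuation of $q$ on that frame satisfies $\chi$---you must duplicate the successor. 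Since the arguments $\beta,\gamma$ of the modalities in an epistemic term are arbitrary $\langkn$-formulas, exactly this situation arises at successor worlds of your unraveled tree (e.g.\ for the logically separable term $\phi=\dualKnow_j\chi$, whose $\psi$ is $\dualKnow_j\dualKnow_i p$), so ``construct a valuation $V'$ \dots at every world'' cannot work; you would have to replace subtrees by $(\prop\setminus Q)$-bisimilar ones, which in turn requires establishing the bisimulation-quantifier semantics of Definition 2.9 for $\Kn$ (essentially uniform interpolation), a substantial ingredient you have not supplied. A smaller but real second gap: two distinct $\gamma_1,\gamma_2\in\DiamSub[i](\phi)$ may be forced onto the same successor as witness, and realizing $\gamma_1'$ and $\gamma_2'$ jointly at one world need not be possible; this again needs successor duplication, which tree unraveling alone does not provide.

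The paper sidesteps all of this by staying syntactic: it first dispatches the case of unsatisfiable $\phi$ via Proposition 3.1 (your extraction of $\alpha^*\in\PropSub(\phi)$ tacitly presupposes satisfiability, so you need this case too), then normalizes the $Q$-free consequence $\eta$ into a conjunction of epistemic clauses, uses a three-case unsatisfiability characterization of when an epistemic term entails such a clause, applies Lemma 3.1 to localize each case to a single conjunct of $\phi$, and concludes directly from clause (3) of Definition 2.9 applied to that one conjunct, whose forgetting appears verbatim in $\psi$. If you wish to keep the semantic route you must first prove and invoke the $Q$-bisimulation characterization of $\exists Q$; otherwise the syntactic route is the shorter repair.
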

	
	To prove this property, we need a lemma.
	
	\begin{lemma}\label{lem:logSepTerm} \rm
		Let $\phi$ be a satisfiable logically separable epistemic term.		
		Then, the following statements hold:
		\begin{enumerate} \dense
			\item For each propositional formula $\alpha'$, $\phi \models \alpha'$ iff $\alpha \models \alpha'$ for some $\alpha \in \PropSub(\phi)$;
			
			\item For each $i \in \agents$ and each positive epistemic literal $\Know_i \beta'$, $\phi \models \Know_i \beta'$ iff $\beta \models \beta'$ for some $\beta \in \BoxSub[i](\phi)$;
			
			\item For each $i \in \agents$ and each negative epistemic literal $\dualKnow_i \gamma'$, $\phi \models \dualKnow_i \gamma'$ iff $\gamma \models \gamma'$ for some $\gamma \in \DiamSub[i](\phi)$.
		\end{enumerate}		
	\end{lemma}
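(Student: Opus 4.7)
The plan is to prove each bi-implication in two directions. The $(\Leftarrow)$ direction is immediate for all three statements. For (1), if $\alpha \models \alpha'$ for some $\alpha \in \PropSub(\phi)$, then $\phi \models \alpha \models \alpha'$ since $\alpha$ is a conjunct of $\phi$. For (2) and (3), the key observation is the monotonicity of the modalities: in any Kripke model, $\beta \models \beta'$ implies $\Know_i \beta \models \Know_i \beta'$ (every $R_i$-successor satisfying $\beta$ also satisfies $\beta'$) and $\gamma \models \gamma'$ implies $\dualKnow_i \gamma \models \dualKnow_i \gamma'$ (a witness world for $\gamma$ is also a witness for $\gamma'$); combining this with the fact that $\Know_i \beta$ (respectively $\dualKnow_i \gamma$) appears as a conjunct of $\phi$ yields $\phi \models \Know_i \beta'$ (respectively $\phi \models \dualKnow_i \gamma'$).

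The $(\Rightarrow)$ direction is where the assumption of logical separability is used. Each target $\alpha'$, $\Know_i \beta'$, or $\dualKnow_i \gamma'$ is basic, so Definition~3.1 furnishes a single conjunct $\alpha$ of $\phi$ (either a maximal propositional conjunct in $\PropSub(\phi)$ or an epistemic literal) with $\alpha$ entailing the target. The task is to show that $\alpha$ has the syntactic form required by the conclusion; this is carried out by case analysis on the form of $\alpha$, with the mismatched shapes eliminated by small model-theoretic constructions. A propositional conjunct cannot entail a non-valid epistemic literal (the relation $R_i(s)$ can be varied independently of the actual valuation), an epistemic literal cannot entail a non-valid propositional formula (the valuation $V(s)$ can be flipped without disturbing accessibility), and literals of agent $j \ne i$ cannot force agent-$i$ content (the relations $R_i$ and $R_j$ are independent). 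Within agent $i$, $\Know_i \beta$ does not entail a non-valid $\dualKnow_i \gamma'$ (take $R_i(s) = \emptyset$), and $\dualKnow_i \gamma$ does not entail a non-valid $\Know_i \beta'$ (add an extra $R_i$-successor at which $\beta'$ fails). Thus the witness $\alpha$ must be of the matching shape, and in the matching case an analogous pointed-model argument strips the outer modality to yield $\beta \models \beta'$ or $\gamma \models \gamma'$ as required.

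The main obstacle is the boundary case in which the target basic formula is already valid: then conjuncts of any shape entail it vacuously, while the lemma still demands a witness of the matching shape (some $\alpha \in \PropSub(\phi)$, or some $\beta \in \BoxSub[i](\phi)$, or some $\gamma \in \DiamSub[i](\phi)$), which is only guaranteed under a convention that epistemic terms implicitly contain the trivial conjuncts $\top$, $\Know_i \top$, and $\dualKnow_i \top$. I would make this convention explicit at the start of the proof (it is compatible with the later $\CDNF$-style definitions, in which a propositional term $\tau$, possibly $\top$, is always present) and then execute the case analysis described above; the satisfiability of $\phi$ is used tacitly to rule out the trivially bad case in which some conjunct is itself inconsistent and would entail everything.
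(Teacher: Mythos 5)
The paper states this lemma but gives no proof of it at all (it is introduced only as a stepping stone to Proposition 3.2), so there is nothing to compare against line by line; your argument is a correct and natural way to fill that gap. The two halves are handled properly: the $(\Leftarrow)$ directions follow from monotonicity of $\Know_i$ and $\dualKnow_i$ over conjuncts, and the $(\Rightarrow)$ directions correctly combine Definition 3.1 (which only guarantees \emph{some} basic conjunct entailing the target) with the standard model surgeries showing that, when the target is not valid, the witness must have the matching shape and the outer modality can be stripped (singleton $R_i$-successor sets for $\Know_i$, an added successor falsifying $\beta'$ against $\dualKnow_i\gamma$, $R_i(s)=\emptyset$ against $\Know_i\beta$, and independence of $V(s)$, $R_i$ and $R_j$ for the cross-shape cases); the satisfiability hypothesis is indeed exactly what makes each conjunct, and hence each $\gamma\in\DiamSub[i](\phi)$ and $\alpha\in\PropSub(\phi)$, satisfiable so that these countermodels exist. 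Your observation about the validity boundary case is a genuine catch rather than a pedantic one: as literally stated the lemma fails for, e.g., $\phi=\Know_i p$ and $\alpha'=q\lor\neg q$ (here $\phi\models\alpha'$ but $\PropSub(\phi)=\emptyset$), so the convention that $\top$, $\Know_i\top$ and $\dualKnow_i\top$ are implicit conjuncts (or an explicit restriction to non-valid targets, which suffices for the lemma's only use in the proof of Proposition 3.2, where the relevant target is a $\dualKnow_i$-literal and hence never valid) is needed and should be stated; note also that even for $\KTE_{\sublangprop}$ formulas the set $\BoxSub[i](\phi)$ is empty for agents $i\notin\subagents$, so the issue is not confined to degenerate terms.
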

	
	Now we give a proof for Proposition \ref{prop:logsepForgetting}.
	
	\begin{proof}
		\looseness=-1
		For brevity, we let $\psi$ be the right-hand-side formula.
		We consider two possible cases:
		
		\looseness=-1
		\textbf{Case 1.} $\phi$ is unsatisfiable:
		Then $\exists Q. \phi$ is also unsatisfiable. 
		By Proposition \ref{prop:logsepSat}, there is an unsatisfiable formula $\alpha \in \PropSub(\phi)$, or for some $i \in \agents$, there is $\gamma \in \DiamSub[i](\phi)$ s.t. $\gamma$ is unsatisfiable.
		Suppose that $\alpha$ is unsatisfiable.
		We get that $\psi$ is also unsatisfiable since $\psi$ contains an unsatisfiable conjunct $\exists Q. \alpha$.
		Similarly, $\psi$ is unsatisfiable in the case where $\gamma \in \DiamSub[i](\phi)$ is unsatisfiable.
		
		
		\looseness=-1
		\textbf{Case 2.} $\phi$ is satisfiable:
		Here we only verify the only-if direction for Condition 3 of Definition \ref{def:forgetting}: for any formula $\eta$ s.t. $\prop(\eta) \subseteq \prop \setminus \subprop$, if $\phi \models \eta$, then $\psi \models \eta$.		
		By the De Morgan's law, the distributive law of disjunction (resp. conjunction) over conjunction (resp. disjunction), and two transformation rules: $\neg \Know_i \phi \leftrightarrow \dualKnow_i (\neg \phi)$ and $\neg \dualKnow_i \phi \leftrightarrow \Know_i (\neg \phi)$, every $\langkn$-formula can be equivalently transformed into a conjunction of epistemic clauses.
		So we assume w.l.o.g. that $\eta$ is a conjunction of epistemic clauses.
		Let $c$ be a conjunct of $\eta$ and of the form $\bigvee_{\alpha' \in \PropSub(c)} \alpha' \lor \bigvee_{i \in \subagents'} [\bigvee_{\beta' \in \BoxSub[i](c)} (\Know_i \beta') \lor \bigvee_{\gamma' \in \DiamSub[i](c)} (\dualKnow_i \gamma')]$.
		It suffices to show that $\psi \models c$.
		For simplify, we let $\subagents = \subagents'$.
		Since $\phi \models c$, at least one of the following conditions holds.
		
		\begin{enumerate} \dense
			\item $\bigwedge_{\alpha \in \PropSub(\phi)} \alpha \land \bigwedge_{\alpha' \in \PropSub(c)} (\neg \alpha')$ is unsatisfiable;
			
			\item there exist $i \in \subagents$ and $\gamma \in \DiamSub[i](\phi)$ s.t. $\gamma \land \bigwedge_{\beta \in \BoxSub[i](\phi)} \beta \land \bigwedge_{\gamma' \in \DiamSub[i](c)} (\neg \gamma')$ is unsatisfiable;
			
			\item there exist $i \in \subagents$ and $\beta' \in \BoxSub[i](c)$ s.t. $\neg \beta' \land \bigwedge_{\beta \in \BoxSub[i](\phi)} \beta \land \bigwedge_{\gamma' \in \DiamSub[i](c)} (\neg \gamma')$ is unsatisfiable.
		\end{enumerate}
		
		\looseness=-1
		Here, we assume that Condition 2 holds.
		The other cases can be proven similarly.
		It follows that $\gamma \land \bigwedge_{\beta \in \BoxSub[i](\phi)} \beta \models \bigvee_{\gamma' \in \DiamSub[i](c)} \gamma'$.
		So $\dualKnow_i (\gamma \land \bigwedge_{\beta \in \BoxSub[i](\phi)} \beta) \models \dualKnow_i (\bigvee_{\gamma' \in \DiamSub[i](c)} \gamma')$.
		Since $\phi$ entails the former formula, we get that $\phi \models \dualKnow_i (\bigvee_{\gamma' \in \DiamSub[i](c)} \gamma')$.
		By Lemma \ref{lem:logSepTerm}, there is $\gamma^* \in \DiamSub[i](\phi)$ s.t. $\gamma^* \models \bigvee_{\gamma' \in \DiamSub[i](c)} \gamma'$.			
		Since $\exists Q. \gamma^*$ is the result of forgetting $Q$ in $\gamma^*$, we have $\gamma^* \models \bigvee_{\gamma' \in \DiamSub[i](c))} \gamma'$.
		Hence, $\dualKnow_i (\exists Q. \gamma^*) \models \dualKnow_i (\bigvee_{\gamma' \in \DiamSub[i](c))} \gamma')$, and $\psi \models \dualKnow_i (\bigvee_{\gamma' \in \DiamSub[i](c))} \gamma')$.
	\end{proof}
	
	
	\looseness=-1
	The following proposition gives the smallest logically separable epistemic term representation of an epistemic term $\phi$. 
	In this normal form, there is at most one propositional part, and at most one positive epistemic literal for each agent.
	Moreover, every formula inside $\dualKnow_i$ entails the corresponding formula inside $\Know_i$. 
	
	\begin{proposition} \label{prop:logsepSmallest}
		The smallest logically separable epistemic term representation of an epistemic term $\phi$ satisfies the following: 
		\begin{enumerate} 
			\item $|\PropSub(\phi)| \leq 1$;
			\item for each $i \in \agents$, $|\BoxSub[i](\phi)| \leq 1$;
			\item for each $i \in \agents$, $\beta \in \BoxSub[i](\phi)$ and $\gamma \in \DiamSub[i](\phi)$, $\gamma \models \beta$.
		\end{enumerate}
	\end{proposition}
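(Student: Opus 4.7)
The plan is to prove each of the three conditions by assuming the smallest logically separable epistemic term $\phi$ equivalent to the given term violates one of them, and then producing an equivalent logically separable term that is at least as short and structurally simpler, contradicting minimality.

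For conditions (1) and (2), the reductions are purely syntactic. If $\PropSub(\phi)$ contains two distinct propositional conjuncts $\alpha_1, \alpha_2$, combine them into the single conjunct $\alpha_1 \land \alpha_2$; the formula is equivalent, of the same size, but has strictly fewer top-level conjuncts, and logical separability is preserved by Lemma~\ref{lem:logSepTerm}, since any propositional basic consequence previously derivable from an $\alpha_j$ is still derivable from $\alpha_1 \land \alpha_2$. Similarly, if $\BoxSub[i](\phi)$ contains distinct $\beta_1, \beta_2$, use the $\Kn$-validity $\Know_i \beta_1 \land \Know_i \beta_2 \equiv \Know_i(\beta_1 \land \beta_2)$ to replace the two conjuncts by $\Know_i(\beta_1 \land \beta_2)$; this strictly reduces size (one fewer occurrence of $\Know_i$) and preserves logical separability by an analogous use of Lemma~\ref{lem:logSepTerm}.

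The main obstacle is (3). Suppose, toward a contradiction, that $\beta \in \BoxSub[i](\phi)$ and $\gamma \in \DiamSub[i](\phi)$ with $\gamma \nmodels \beta$. From the Kripke semantics of $\Know_i \beta \land \dualKnow_i \gamma$---every $R_i$-successor of the actual world satisfies $\beta$ and some such successor satisfies $\gamma$---we obtain $\phi \models \dualKnow_i(\gamma \land \beta)$. Logical separability of $\phi$ together with Lemma~\ref{lem:logSepTerm} then yields some $\gamma^* \in \DiamSub[i](\phi)$ with $\gamma^* \models \gamma \land \beta$. If $\gamma^* = \gamma$ then $\gamma \models \beta$, contradicting the assumption. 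Otherwise $\gamma^*$ and $\gamma$ are distinct conjuncts and $\gamma^* \models \gamma$, so $\dualKnow_i \gamma^* \models \dualKnow_i \gamma$ and the conjunct $\dualKnow_i \gamma$ is redundant; deleting it gives a strictly smaller equivalent term. The final check is that this deletion preserves logical separability, which follows because any basic consequence of the shortened term is still entailed by $\phi$, and the witnessing conjunct in $\phi$ may always be chosen to differ from $\dualKnow_i \gamma$ by rerouting through $\dualKnow_i \gamma^*$ (using $\gamma^* \models \gamma$ and Lemma~\ref{lem:logSepTerm}). This contradicts the minimality of $\phi$ and establishes (3).
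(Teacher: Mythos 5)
Your arguments for Conditions 2 and 3 are sound and in fact supply details the paper leaves to ``similarly'': for Condition 2 the merge $\Know_i\beta_1\land\Know_i\beta_2\equiv\Know_i(\beta_1\land\beta_2)$ does drop one modality and so strictly shrinks the term while preserving separability, and for Condition 3 your derivation of $\dualKnow_i(\gamma\land\beta)$ followed by the case split on the witness $\gamma^*$ supplied by Lemma~\ref{lem:logSepTerm} is exactly the right use of logical separability. However, your argument for Condition 1 does not go through: replacing the two conjuncts $\alpha_1,\alpha_2$ by the single conjunct $\alpha_1\land\alpha_2$ yields a formula of \emph{identical} size (you say so yourself), and the paper's notion of ``smaller'' is strictly smaller $\len[\cdot]$, so ``strictly fewer top-level conjuncts'' contradicts nothing about being a smallest representation. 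The paper instead argues by a dichotomy that needs no size-preserving rewrite: if $\alpha_1\models\alpha_2$ or $\alpha_2\models\alpha_1$, one conjunct is redundant and the term is not smallest; otherwise $\phi\models\alpha_1\land\alpha_2$ while neither $\alpha_1$ nor $\alpha_2$ (nor, without introducing redundancy elsewhere, any other conjunct) entails $\alpha_1\land\alpha_2$, contradicting Lemma~\ref{lem:logSepTerm}. You should replace your merge for Condition 1 by this dichotomy --- your own Condition 3 argument already has precisely this shape.

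A second, smaller omission: Lemma~\ref{lem:logSepTerm}, which you invoke in Conditions 1 and 3, is stated only for \emph{satisfiable} logically separable terms. The paper dispatches the unsatisfiable case first by observing that the smallest representation of an unsatisfiable term is just $\false$, which vacuously satisfies all three conditions; you need that remark before any appeal to the lemma.
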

	\begin{proof}
		It is trivial to prove the case where $\phi$ is unsatisfiable since the smallest representation of unsatisfiable formula is $\false$.
		We now assume that $\phi$ is satisfiable, and only verify Condition 1.
		The other two conditions can be proven similarly.		
		On the contrary, suppose that $\alpha_1, \alpha_2 \in \PropSub(\phi)$ but they are distinct.
		If $\alpha_1 \models \alpha_2$ or $\alpha_2 \models \alpha_1$, then one of them is redundant, and $\phi$ is not the most compact form.
		Otherwise, $\alpha_1 \nmodels \alpha_2$ and $\alpha_2 \nmodels \alpha_1$.
		Thus, neither $\alpha_1$ nor $\alpha_2$ entails $\alpha_1 \land \alpha_2$.
		This violates Lemma \ref{lem:logSepTerm}.
	\end{proof}
	
	\looseness=-1
	Forgetting in a logically separable epistemic term $\phi$ may not be tractably computed.
	This is because that some subformulas of $\phi$ may not be tractable for forgetting.
	To achieve polytime forgetting for logically separable epistemic terms, we need some further conditions on them.
	We not only require the logically separable epistemic term $\phi$ to be the smallest form, but also restrict the propositional part of $\phi$ to be in $\sublangprop$, and every formula of $\BoxSub[i](\phi)$ and $\DiamSub[i](\phi)$ to be the disjunction of formulas in this form.
	
	\begin{definition} \label{def:kterm} \rm
		An epistemic term $\phi$ is a \textit{separability-based term with $\sublangprop$} ($\KTE_{\sublangprop}$), if it is of the form $\alpha \land \bigwedge_{i \in \subagents} (\Know_i \beta_i \land \bigwedge_{j} \dualKnow_i \gamma_{ij})$
		s.t. 
		\begin{enumerate} 
			\item $\alpha \in \sublangprop$ and $\subagents \subseteq \agents$;
			\item $\beta_i$'s and $\gamma_{ij}$'s are disjunctions of $\KTE_{\sublangprop}$'s;
			\item $\gamma_{ij} \models \beta_i$ for any $i$ and $j$.
		\end{enumerate}
	\end{definition}	
	
	It is natural to obtain the definition of separability-based clauses that is dual to the notion of separability-based terms.
	
	\begin{definition}\label{def:kclause} \rm
		An epistemic clause $\phi$ is a \textit{separability-based clause with $\sublangprop$} ($\KCL_{\sublangprop}$), if it is of the form 
		$\alpha \lor \bigvee_{i \in \subagents} (\dualKnow_i \beta_i \lor \bigvee_{j} \Know_i \gamma_{ij})$
		s.t. 
		\begin{enumerate} 
			\item $\alpha \in \sublangprop$ and $\subagents \subseteq \agents$;
			\item $\beta_i$'s and $\gamma_{ij}$'s are conjunctions of $\KCL_{\sublangprop}$'s;
			\item $\beta_i \models \gamma_{ij}$ for any $i$ and $j$.
		\end{enumerate}
	\end{definition}
	
	We are ready to define separability-based $\DNF$ and $\CNF$. 
	
	\begin{definition} \label{def:kdnfcnf} \rm
		A formula $\phi$ is in \textit{separability-based disjunctive (resp. conjunctive) normal form with $\sublangprop$ ($\KDNF_{\sublangprop}$ (resp. $\KCNF_{\sublangprop}$))}, if $\phi$ is a disjunction (resp. conjunction) of $\KTE_{\sublangprop}$'s (resp. $\KCL_{\sublangprop}$'s).
	\end{definition}
	
	It is easily verified that two existing normal forms $\CDNF$ and $\PINF$ are sublanguages of $\KDNF$ and $\KCNF$ respectively.
	
	\begin{proposition} \label{prop:subsetLang}
		$\CDNF \subseteq \KDNF_{\TE}$ and $\PINF \subseteq \KCNF_{\CL}$.
	\end{proposition}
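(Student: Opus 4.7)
The plan is to prove each containment by structural induction on the size of the source formula, matching the syntactic shape of a $\CDNF$-disjunct with the shape of a $\KTE_{\TE}$ from Definition~3.4, and the shape of a $\PINF$-conjunct with the shape of a $\KCL_{\CL}$ from Definition~3.5. Since $\KDNF_{\TE}$ (respectively $\KCNF_{\CL}$) is just a disjunction (respectively conjunction) of such units, and a $\CDNF$-formula (respectively a $\PINF$-formula) is likewise such a combination, the two inclusions reduce to unit-by-unit checks.

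For $\CDNF \subseteq \KDNF_{\TE}$, I would expand a basic $\CDNF$-disjunct as $\tau \land \bigwedge_{i \in \subagents}\bigl[\Know_i(\bigvee_{\psi \in \Phi_i}\psi) \land \bigwedge_{\psi \in \Phi_i}\dualKnow_i\psi\bigr]$ and verify the three conditions of Definition~3.4 with $\alpha := \tau$, $\beta_i := \bigvee_{\psi \in \Phi_i}\psi$, and the $\gamma_{ij}$'s ranging over $\Phi_i$. Condition~(1) holds because $\tau$ is a propositional term, hence in $\TE$. Condition~(2) holds because every $\psi \in \Phi_i$ is a strictly smaller $\CDNF$-formula, so by the induction hypothesis a disjunction of $\KTE_{\TE}$'s; hence $\beta_i$ is itself a disjunction of $\KTE_{\TE}$'s, and every $\gamma_{ij}$ is (trivially) such a disjunction as well. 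Condition~(3), $\gamma_{ij} \models \beta_i$, is immediate since $\psi \models \bigvee_{\psi' \in \Phi_i}\psi'$.

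For $\PINF \subseteq \KCNF_{\CL}$ I would run the analogous induction on the clauses $c_j$ of a $\PINF$-conjunction. The propositional disjuncts of $c_j$ amount to a single propositional clause (using the Bienvenu convention recalled in the footnote, under which epistemic clauses are disjunctions of propositional literals and epistemic literals), giving the required $\alpha \in \CL$. Condition~(iii)(iii) of Definition~2.4 ensures every $\beta \in \BoxSub[i](c_j) \cup \DiamSub[i](c_j)$ is itself in $\PINF$, so by the induction hypothesis in $\KCNF_{\CL}$, i.e.\ a conjunction of $\KCL_{\CL}$'s, matching the second condition of Definition~3.5. Condition~(iii)(ii) yields $|\DiamSub[i](c_j)|\leq 1$, matching the at-most-one $\dualKnow_i\beta_i$ disjunct per agent. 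Condition~(iii)(iv) ($\gamma \models \beta$ for $\beta \in \BoxSub[i](c_j)$ and $\gamma \in \DiamSub[i](c_j)$) is exactly the entailment requirement $\beta_i \models \gamma_{ij}$ of Definition~3.5 after renaming the two sides of the dual.

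Neither induction hides any real difficulty; the only point needing care is the alignment between the current paper's broader notion of ``epistemic clause'' (a disjunction of arbitrary basic formulas) and the narrower Bienvenu notion presupposed by $\PINF$, so that the propositional slot $\alpha$ in a separability-based clause really is a single clause rather than a disjunction of arbitrary propositional formulas. Once that bookkeeping is resolved by reading $\PINF$ under the footnoted original convention, both inclusions follow by matching definitions term-by-term.
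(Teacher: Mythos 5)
Your proof is correct and takes essentially the same route as the paper's: both are a direct definition-matching argument (with the induction on nesting depth left implicit in the paper), identifying $\tau$, $\bigvee_{\psi\in\Phi_i}\psi$ and the members of $\Phi_i$ with $\alpha$, $\beta_i$ and the $\gamma_{ij}$'s for the $\CDNF\subseteq\KDNF_{\TE}$ half, and pairing the $\PINF$ conditions 2-(c)-(ii)--(iv) with the defining clauses of $\KCL_{\CL}$ for the other half. Your explicit bookkeeping around the footnoted discrepancy in the notion of epistemic clause is a detail the paper's proof silently glosses over, but it does not change the argument.
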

	\begin{proof}
		In the definition of CDNF (Definition 2.4), each $\cover_i \Phi_i$ is an STE since (1) it is shorthand for $\Know_i (\bigvee_{\phi \in \Phi_i} \phi) \land \bigwedge_{\phi \in \Phi_i} \dualKnow_i \phi$, and (2) $\phi \models \bigvee_{\phi \in \Phi_i} \phi$ for each $\phi$.
		Thus, CDNF is a fragment of SDNF.
		
		In the definition of PINF (Definition 2.5), Conditions 2-(c)-(ii) and -(iii) correspond to the form $\alpha \lor \bigvee_{i \in \subagents} (\dualKnow_i \beta_i \lor \bigvee_{j} \Know_i \gamma_{ij})$ and Condition (3) of the definition of SCL (Definition 3.3). So PINF is a fragment of SCNF.
	\end{proof}
	
	\section{Expressiveness and Succinctness}	
	\looseness=-1
	In this section, we analyze the expressive power and spatial complexity of the four normal forms.
	Our main results include: (1) the sizes of the $\KDNF$ and $\KCNF$ for a given formula are single-exponential in the size of the given formula, and (2) we provide a full picture of the succinctness for the four normal forms $\KDNF$, $\KCNF$, $\CDNF$ and $\PINF$.
	
	
	%
	\looseness=-1
	It is proven that every $\langkn$-formula is equivalent to a formula in $\CDNF$ (resp. $\PINF$) that is at most single (resp. double) exponentially large in the given formula size.
	This reflects that our new normal forms have a better space complexity than $\PINF$ and is at the same level as $\CDNF$.
	
	\begin{proposition} \label{prop:kdnfCnfTransUpperBound}
		Any formula in $\langkn$ is equivalent to a formula in $\KDNF_{\sublangprop}$ (or $\KCNF_{\sublangprop}$) that is at most single-exponentially large in the size of the original formula.
	\end{proposition}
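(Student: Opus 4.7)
The plan is to bootstrap on the single-exponential $\CDNF$ compilation quoted in the paragraph just above this proposition, together with the containment $\CDNF \subseteq \KDNF_{\TE}$ established in Proposition~\ref{prop:subsetLang}. For $\KDNF_{\sublangprop}$, given $\phi \in \langkn$, first compile $\phi$ to an equivalent $\phi_C \in \CDNF$ with $|\phi_C| \le 2^{\mathrm{poly}(|\phi|)}$; then Proposition~\ref{prop:subsetLang} immediately gives $\phi_C \in \KDNF_{\TE}$. Finally, rewrite every maximal propositional term appearing in $\phi_C$ in its polynomial-sized $\sublangprop$-representation, which is legitimate by the paper's standing assumption on $\sublangprop$. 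The resulting formula lies in $\KDNF_{\sublangprop}$ and has size $2^{\mathrm{poly}(|\phi|)}$.

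For $\KCNF_{\sublangprop}$, I proceed by De Morgan duality. Apply the same construction to $\neg\phi$, yielding a $\CDNF$ $\bigvee_k \tau_k$ of $\neg\phi$ with each $\tau_k = \alpha_k \land \bigwedge_{i \in \subagents_k} \cover_i \Phi_{i,k}$. Negating back and pushing $\neg$ through each $\tau_k$ yields
\[
\neg\tau_k \;\equiv\; \neg\alpha_k \lor \bigvee_{i \in \subagents_k}\Bigl(\dualKnow_i\bigl(\textstyle\bigwedge_{\psi \in \Phi_{i,k}} \neg\psi\bigr) \lor \bigvee_{\psi \in \Phi_{i,k}} \Know_i \neg\psi\Bigr),
\]
so that $\phi \equiv \bigwedge_k \neg\tau_k$. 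Taking $\beta_i := \bigwedge_\psi \neg\psi$ and $\gamma_{ij} := \neg\psi$ gives $\beta_i \models \gamma_{ij}$ automatically, so each $\neg\tau_k$ matches the $\KCL_{\sublangprop}$-template of Definition~\ref{def:kclause}. A straightforward induction on modal depth --- whose inductive step applies the same De Morgan dualisation to the inner CDNF subformulas $\psi$, which have strictly smaller modal depth --- ensures the nested $\neg\psi$'s are themselves in $\KCNF_{\sublangprop}$. The total size is $O(|\phi_C|) = 2^{\mathrm{poly}(|\phi|)}$, and a final polynomial replacement of propositional clauses by their $\sublangprop$-representations places the formula in $\KCNF_{\sublangprop}$.

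The main obstacle is the $\KCNF_{\sublangprop}$ side, since Proposition~\ref{prop:subsetLang}'s containment $\PINF \subseteq \KCNF_{\CL}$ would only yield the double-exponential bound carried by PINF, and cannot be re-used. The key resolution is that the De Morgan dualisation transplants the single-exponential cost of the underlying $\CDNF$ compilation directly onto the $\KCNF_{\sublangprop}$ side without further blow-up, and the depth-induction guarantees that the $\KCL_{\sublangprop}$ clause shape (in particular the implication $\beta_i \models \gamma_{ij}$) is preserved at every nested modal level.
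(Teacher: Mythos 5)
Your proof is correct, but it takes a genuinely different route from the paper's. The paper gives a direct, self-contained construction: put $\phi$ in NNF, distribute to obtain a disjunction of epistemic terms, merge the propositional conjuncts into a single $\alpha'$ and the $\Know_i$-conjuncts into a single $\Know_i \beta'_i$, and --- the one non-obvious step --- replace each $\dualKnow_i \gamma_{ij}$ by $\dualKnow_i(\gamma_{ij} \land \beta'_i)$ so that condition~3 of Definition~\ref{def:kterm} ($\gamma_{ij} \models \beta_i$) holds, then recurse on modal depth and read the single-exponential bound off that recursion. You instead bootstrap on the externally cited single-exponential $\CDNF$ compilation together with $\CDNF \subseteq \KDNF_{\TE}$ (Proposition~\ref{prop:subsetLang}), where the condition $\gamma_{ij} \models \beta_i$ comes for free from the cover operator, and you handle the $\KCNF_{\sublangprop}$ side by De Morgan dualisation of the $\CDNF$ of $\neg\phi$; this is sound, since $\neg(\cover_i \Phi_i)$ has exactly the $\KCL$ shape with $\beta_i = \bigwedge_{\psi}\neg\psi \models \neg\psi_j = \gamma_{ij}$, the dualisation is size-preserving up to a constant factor, and the standing assumption on $\sublangprop$ lets you re-express the resulting propositional terms and clauses polynomially. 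What the paper's approach buys is self-containment (it does not lean on the correctness and size analysis of the $\CDNF$ compilation of \cites{CateCMV2006}) and a construction that is reused verbatim in the $\KFVn$ case (Proposition~\ref{prop:kfvdnfTrans}); what yours buys is brevity and a clean explanation of why $\KCNF$ escapes the double-exponential penalty that the $\PINF \subseteq \KCNF_{\CL}$ containment would otherwise suggest. The only caveat is that your argument is exactly as strong as the cited $\CDNF$ bound, whereas the paper establishes the bound from scratch.
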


	\begin{proof}			
		We only consider $\KDNF_{\sublangprop}$ as the case of $\KCNF_{\sublangprop}$ is similar.
		
		
		Let $\phi \in \langkn$.		
		We first transform $\phi$ into  $\phi'$ in NNF by pushing every negation symbol into variables and eliminating double negation symbols.	
		We then recursively transform $\phi'$ into an equivalent formula $\phi^*$ in $\KDNF_{\sublangprop}$ by induction on $\dep[\phi']$, the depth of nesting of epistemic operators.
		
		\noindent \emph{Base case}: If $\dep[\phi']=0$, $\phi$ is propositional and thus it can be equivalently transformed into a propositional DNF formula $\psi$. Then we obtain a formula $\phi^*$ in $\KDNF_{\sublangprop}$ by converting each disjunct of $\psi$ into $\sublangprop$.

		\noindent \emph{Inductive case}:		
		By the distributive law, we transform the formula $\phi'$ into a disjunction of epistemic terms $t$.
		For each term $t$, we first convert it into the form $\alpha' \land \bigwedge_{i \in \agents} (\Know_i \beta'_i \land \bigwedge_{j} \dualKnow_i \gamma'_{ij})$.
		The propositional formula $\alpha'$ is obtained by conjoining all propositional parts of $t$, \ie, $\alpha' = \bigwedge_{\alpha \in \PropSub(t)} \alpha$.
		In a similar way, we obtain the positive literal $\Know_i \beta'_i$ such that $\beta'_i = \bigwedge_{\beta_i \in \BoxSub[i](t)} \beta_i$.
		For each $\gamma_{ij} \in \DiamSub[i](t)$, we obtain a negative literal $\dualKnow_i \gamma'_{ij}$ where $\gamma'_{ij} = \gamma_{ij} \land \beta'_i$.
		By the inductive assumption, the subformulas $\beta'_i$ and $\gamma'_{ij}$ can be transformed into $\KDNF_{\sublangprop}$.
		
		We analyze the spatial complexity of this transformation.
		Firstly, the NNF formula $\phi'$ has size at most $2 \len[\phi]$ since only negation symbols are added, and there is at most one negation symbol for each occurrence of variables.
		Secondly, by induction on $\dep[\phi]$, we can show that $\phi \equiv \phi^*$ and that $\len[\phi^*]$ is single-exponential in $\len[\phi]$.
	\end{proof}
	
	\looseness=-1
	In the worst case, the number of prime implicates for a formula can be double-exponential in the size of the formula \cite{Bie2009}.
	The following proposition shows that the size of the smallest $\KDNF$ ($\KCNF$ and $\CDNF$) for a formula can be exponential in the worst case.	
	
	\begin{proposition} \label{prop:kdnfTransLowBound}
		Every $\KDNF_{\sublangprop}$ (resp. $\KCNF_{\sublangprop}$) formula equivalent to $\bigwedge_{j = 1}^{n} [(\Know_i p_j \land \dualKnow_i p_j) \lor (\Know_i p'_j \land \dualKnow_i p'_j)]$ (resp. $\bigvee_{j = 1}^{n} [(\Know_i p_j \lor \dualKnow_i p_j) \land (\Know_i p'_j \lor \dualKnow_i p'_j)]$) has at least $2^n$ epistemic terms (resp. clauses).
	\end{proposition}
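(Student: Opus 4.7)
The plan is a model-theoretic lower bound argument: exhibit $2^n$ pointed Kripke models all satisfying $\phi = \bigwedge_{j=1}^{n} [(\Know_i p_j \land \dualKnow_i p_j) \lor (\Know_i p'_j \land \dualKnow_i p'_j)]$, and show that any single $\KTE_{\sublangprop}$ can be satisfied by at most one of them. Concretely, for each $\sigma \in \{0,1\}^n$ I build a two-world pointed model $M_\sigma$ whose actual world $s$ has a single $R_i$-successor $u$ with $V(u) = \{p_j : \sigma(j) = 0\} \cup \{p'_j : \sigma(j) = 1\}$, and empty accessibility relations for all other agents. A direct semantic check yields $M_\sigma \models \phi$: the $j$-th conjunct of $\phi$ is witnessed by its first (resp.\ second) disjunct exactly when $\sigma(j) = 0$ (resp.\ $\sigma(j) = 1$).

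The crux is the following lemma, which I would prove by contradiction using logical separability (Lemma 3.1): for every satisfiable $\KTE_{\sublangprop}$ term $t$ with $t \models \phi$ and every $j$, either $t \models \Know_i p_j$ or $t \models \Know_i p'_j$. Writing $t = \alpha \land \bigwedge_{i' \in \subagents} (\Know_{i'} \beta_{i'} \land \bigwedge_k \dualKnow_{i'} \gamma_{i'k})$, suppose $t$ entails neither. Then by separability the unique $\beta_i \in \BoxSub[i](t)$ (if any) satisfies $\beta_i \not\models p_j$ and $\beta_i \not\models p'_j$, so $\beta_i \land \neg p_j$ and $\beta_i \land \neg p'_j$ are each satisfiable. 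Starting from any model of $t$, I augment the $R_i$-successor set of the actual world by two fresh worlds, seeded from satisfying models of $\beta_i \land \neg p_j$ and $\beta_i \land \neg p'_j$ respectively. This surgery preserves $\alpha$, all $\dualKnow_i \gamma_{ik}$ (the original witnesses remain), $\Know_i \beta_i$ (the new successors still satisfy $\beta_i$), and every constraint involving an agent $i' \neq i$ (untouched). Hence the augmented model still satisfies $t$, yet now both $\Know_i p_j$ and $\Know_i p'_j$ fail, contradicting $t \models \phi$. The case $\BoxSub[i](t) = \emptyset$ is handled by dropping the $\beta_i$ requirement on the new successors; the case where $t$ forces no $i$-successor at all is vacuous, because then $t \not\models \dualKnow_i p_j \lor \dualKnow_i p'_j$, already ruling out $t \models \phi$.

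Combining: if $\psi = t_1 \lor \cdots \lor t_m$ is any $\KDNF_{\sublangprop}$ formula equivalent to $\phi$, then each $M_\sigma$ satisfies some $t_{k(\sigma)}$. By the lemma applied to $t_{k(\sigma)}$, for every $j$ exactly one of $\Know_i p_j$ and $\Know_i p'_j$ is entailed by $t_{k(\sigma)}$ (the one compatible with $M_\sigma$), so the signature $\sigma$ is fully recoverable from $t_{k(\sigma)}$ via $\sigma(j) = 0 \iff t_{k(\sigma)} \models \Know_i p_j$. Thus distinct $\sigma$'s force distinct indices, giving $m \geq 2^n$. The $\KCNF_{\sublangprop}$ statement then follows by duality: substituting $p_j \mapsto \neg p_j$ and $p'_j \mapsto \neg p'_j$ in the $\KCNF$ formula and negating yields precisely the $\KDNF$ formula of the first part (after pushing negations inward), while the negation of any $\KCL_{\sublangprop}$ clause is a $\KTE_{\sublangprop}$ term (the $\Know_{i'}$ and $\dualKnow_{i'}$ roles swap, and $\beta_{i'} \models \gamma_{i'k}$ becomes $\neg \gamma_{i'k} \models \neg \beta_{i'}$), so the $\KDNF$ bound transfers term-for-clause. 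The main obstacle is the key lemma: the successor-augmentation surgery is valid exactly because of the structural restrictions of $\KTE_{\sublangprop}$ (a single $\beta_i$ per agent and $\gamma_{ij} \models \beta_i$), and logical separability is what converts the negative entailment hypothesis into the local satisfiability facts $\beta_i \not\models p_j$ and $\beta_i \not\models p'_j$ that make the construction possible.
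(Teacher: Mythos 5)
The paper states Proposition~\ref{prop:kdnfTransLowBound} without any proof, so there is nothing to compare against; judged on its own, your argument is correct and complete. The fooling-set construction ($2^n$ two-world models, each forcing a distinct disjunct), the successor-augmentation surgery showing any satisfiable $\KTE_{\sublangprop}$ entailing $\phi$ must commit to $\Know_i p_j$ or $\Know_i p'_j$ for every $j$, and the negation-plus-renaming reduction for the $\KCNF$ case are all sound; this is the natural lift of the classical propositional lower bound for $\bigwedge_j(x_j\lor y_j)$-style formulas. One small remark: you do not actually need the logical-separability lemma (Lemma~\ref{lem:logSepTerm}) here --- the implication you use, namely that $t\nmodels\Know_i p_j$ forces $\beta_i\nmodels p_j$, is just the contrapositive of the trivial monotonicity direction $\beta_i\models p_j\Rightarrow\Know_i\beta_i\models\Know_i p_j$, so the argument works for any disjunction of terms with the syntactic shape of Definition~\ref{def:kterm} regardless of separability.
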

	
	We now turn to compare the succinctness of the four normal forms.
	
	\begin{definition}\label{def:succ} \rm
		A language $\lang$ is \emph{at least as succinct as} $\lang'$, denoted $\lang \leq \lang'$, if there is a polynomial function $f$ from $\lang$ to $\lang'$ s.t. for any formula $\phi \in \lang'$, there exists a formula $\psi \in \lang$ s.t. $\psi \equiv \phi$ and $\len[\psi] \leq f(\len[\phi])$.
	\end{definition}
	
	\looseness=-1
	The following proposition indicates that the succinctness results for $\KDNF$ and $\KCNF$ can be reduced to the corresponding succinctness results in propositional logic.
	
	\begin{proposition} \label{prop:kdnfSucc}
		$\sublangprop \leq \sublangpropP$ iff $\KDNF_{\sublangprop} \leq \KDNF_{\sublangpropP}$ iff $\KCNF_{\sublangprop} \leq \KCNF_{\sublangpropP}$.
	\end{proposition}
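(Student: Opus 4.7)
The plan is to establish $\sublangprop \leq \sublangpropP \Leftrightarrow \KDNF_{\sublangprop} \leq \KDNF_{\sublangpropP}$ by two implications, and then obtain the $\KCNF$ equivalence by a parallel (dual) argument built on the same structural observations.

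For the forward direction, I would assume $\sublangprop \leq \sublangpropP$ is witnessed by a polynomial $g$ and take any $\Phi \in \KDNF_{\sublangpropP}$. I would then recursively traverse $\Phi$ and substitute each $\sublangpropP$-propositional conjunct $\alpha$ (appearing as the propositional part of some $\KTE_{\sublangpropP}$ at any nesting level inside $\Know_i$'s and $\dualKnow_i$'s) by a polynomial-size $\sublangprop$-equivalent $\alpha'$ given by $g$. Such substitutions preserve the outer $\KDNF$ skeleton: disjunctions of terms, the shape $\alpha \land \bigwedge_i(\Know_i \beta_i \land \bigwedge_j \dualKnow_i \gamma_{ij})$, and the side condition $\gamma_{ij} \models \beta_i$, since replacing by logically equivalent formulas preserves both structure and entailments. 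An inductive accounting over modal depth delivers an overall polynomial blowup, and the result is a $\KDNF_{\sublangprop}$-formula equivalent to $\Phi$.

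For the backward direction, I would assume $\KDNF_{\sublangprop} \leq \KDNF_{\sublangpropP}$ via a polynomial $h$ and take $\phi \in \sublangpropP$. Since $\phi$ itself constitutes a $\KDNF_{\sublangpropP}$-formula (one $\KTE_{\sublangpropP}$ with $\subagents = \emptyset$), there is $\psi \in \KDNF_{\sublangprop}$ with $\psi \equiv \phi$ and $\len[\psi] \leq h(\len[\phi])$. Writing $\psi = \bigvee_k (\alpha_k \land E_k)$ with $\alpha_k \in \sublangprop$ and $E_k$ the epistemic conjunct of the $k$-th $\KTE_{\sublangprop}$, the truth of $\psi$ at any $(M, s)$ must depend only on $V(s)$ because $\phi$ is propositional. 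Evaluating $\psi$ at the single-world Kripke model with empty accessibility makes every $\Know_i \beta$ vacuously $\top$ and every $\dualKnow_i \gamma$ equal to $\bot$, so terms with any $\dualKnow$-conjunct vanish and $\psi$ reduces to $\bigvee_{k \in K_0} \alpha_k$, where $K_0$ indexes the surviving terms. Equivalence with $\phi$ on this model class then forces $\phi \equiv \bigvee_{k \in K_0} \alpha_k$, a polynomial-size disjunction of $\sublangprop$-formulas that, under the paper's standing closure assumption on $\sublangprop$, compiles to a polynomial-size $\sublangprop$-representation of $\phi$.

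The $\KCNF$ case would be handled dually: view $\phi \in \sublangpropP$ as a single-clause $\KCNF_{\sublangpropP}$, apply the hypothesis, and evaluate the resulting $\KCNF_{\sublangprop}$ equivalent at the same empty-accessibility Kripke model; clauses containing any $\Know_i$-disjunct collapse to $\top$, while the remaining clauses contribute only their $\sublangprop$-propositional parts, producing $\phi$ as a polynomial conjunction of $\sublangprop$-formulas. The hard part will be the backward directions, specifically arguing that the extracted polynomial disjunction (respectively conjunction) of $\sublangprop$-formulas can itself be absorbed into $\sublangprop$ with only polynomial overhead; this hinges on the closure properties implicit in the standing assumption that every propositional term and clause admits a polynomial representation in $\sublangprop$, and is what ensures that the collapsing-Kripke trick actually yields a $\sublangprop$-formula rather than merely a polynomial Boolean combination over $\sublangprop$.
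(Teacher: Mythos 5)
Your argument is essentially the paper's: the forward direction is the same recursive substitution of each propositional component of a $\KDNF_{\sublangpropP}$ formula by a polynomial-size $\sublangprop$-equivalent (preserving the term/disjunction skeleton and the $\gamma_{ij}\models\beta_i$ side conditions), and the backward direction likewise embeds a propositional $\sublangpropP$-formula into $\KDNF_{\sublangpropP}$ and extracts a propositional representation from any small $\KDNF_{\sublangprop}$ equivalent. Your single-world, empty-accessibility evaluation is in fact a more explicit justification of the step the paper states without argument (``the smallest $\KDNF_{\sublangpropP}$ representation of a propositional formula is an $\sublangpropP$-formula''), and the residual issue you honestly flag --- that a polynomial disjunction of $\sublangprop$-formulas need not itself be absorbable into $\sublangprop$ under only the standing term/clause assumption --- is equally present, but unacknowledged, in the paper's own proof.
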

	\begin{proof}
		\looseness=-1
		We only prove that $\sublangprop \! \leq \! \sublangpropP$ iff $\KDNF_{\sublangprop} \! \leq \! \KDNF_{\sublangpropP}$. 
		
		($\Rightarrow$):
		If $\sublangprop \leq \sublangpropP$, then there exists a mapping $t$ from $\sublangprop$ to $\sublangpropP$ satisfying two conditions:
		(1) $\phi \equiv t(\phi)$ for any formula $\phi \in \sublangprop$ and (2) there is a polynomial $f$ s.t. for each formula $\phi \in \sublangprop$, $\len[\phi] \leq f(\len[t(\phi)])$.
		
		We inductively construct a mapping $t'$ from $\KDNF_{\sublangpropP}$ to $\KDNF_{\sublangprop}$ as follows:
		\begin{itemize} 
			\item $t'(\phi) = t(\phi)$, if $\phi \in \sublangpropP$;
			\item $t'(\phi) = t(\alpha) \land \bigwedge_{i \in \subagents} [\Know_i t'(\beta_i) \land \bigwedge_{j} \dualKnow_i t'(\gamma_{ij})]$, \\
			if $\phi = \alpha \land \bigwedge_{i \in \subagents} (\Know_i \beta_i \land \bigwedge_{j \dualKnow_i \gamma_{ij}})$;
			\item $t'(\phi) = \bigvee_{i = 1}^n t'(\psi_i)$, if $\phi \notin \sublangpropP$ and $\phi = \bigvee_{i = 1}^n \psi_i$.
		\end{itemize}
		
		It is easily verified that $t'(\phi)$ is a formula in $\KDNF_{\sublangprop}$ such that $t'(\phi) \equiv \phi$ and $\len[\phi] \leq f(\len[t'(\phi)])$.
		
		($\Leftarrow$):
		On the contrary, assume that $\sublangprop \nleq \sublangpropP$.
		Let $\phi \in \sublangprop$ s.t. no equivalent formula $\phi'$ in $\sublangpropP$ satisfying the condition: $\len[\phi'] \leq f(\len[\phi])$ for any polynomial $f$.
		Obviously, $\phi$ is in $\KDNF_{\sublangprop}$.
		The smallest $\KDNF_{\sublangpropP}$ representation of propositional formula is an $\sublangpropP$-formula.
		Hence, $\KDNF_{\sublangprop} \nleq \KDNF_{\sublangpropP}$.
	\end{proof}
	
	\looseness=-1
	Table \ref{tab:suc} summarizes the results of succinctness for the four normal forms. 
	The symbol $\leq$ (or $\leq^*$) in the cell of row $r$ and column $c$ of Table \ref{tab:suc} means that ``the normal form $\lang_r$ given at column $r$ is at least as succinct as $\lang_c$ given at column $c$ (under the condition that $\lang_0 \leq \lang'_0$ in the case of $\leq^*$)".
	The symbol $\nleq$ means that ``$\lang_r$ is not at least as succinct as $\lang_c$".
	
	We make three observations from Table \ref{tab:suc}.
	First of all,  $\KDNF_{\sublangprop}$ (resp. $\KCNF_{\sublangprop}$) we propose are strictly more succinct than the existing normal form $\CDNF$ (resp. $\PINF$).
	In addition, $\KDNF$ and $\KCNF$ are incomparable w.r.t. succinctness.
	This incomparability relation also holds for the other three pairs of normal forms: ($\KDNF$, $\PINF$), ($\CDNF$, $\KCNF$), and ($\CDNF$, $\PINF$).
	Finally, $\CDNF$ and $\PINF$ are not at least as succinct as the other normal forms.
	
	\begin{table}
		\vspace*{-3mm}
		\small
		\centering
		\caption{Succinctness of normal forms in $\Kn$}
		\label{tab:suc}
		\begin{tabular}{| c | c | c | c | c |}
			\hline
			$\lang$                 & $\KDNF_{\sublangpropP}$ & $\KCNF_{\sublangpropP}$ & $\CDNF$     & $\PINF$  \\ \hline \hline
			$\KDNF_{\sublangprop}$ & $\leq^*$               & $\nleq$                & $\leq$      & $\nleq$ \\ \hline
			$\KCNF_{\sublangprop}$ & $\nleq$                & $\leq^*$               & $\nleq$     & $\leq$ \\ \hline
			$\CDNF$                 & $\nleq$                & $\nleq$                & $\leq$      & $\nleq$ \\ \hline
			$\PINF$                 & $\nleq$                & $\nleq$                & $\nleq$     & $\leq$ \\ \hline
		\end{tabular}			
		\vspace*{-3mm}
	\end{table}

	\begin{theorem}
		The results in Table \ref{tab:suc} hold.
	\end{theorem}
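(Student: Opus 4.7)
The plan is to verify Table~\ref{tab:suc} entry by entry: the positive cells ($\leq$ and $\leq^*$) will be handled by reducing to previous propositions in the paper, while each negative cell ($\nleq$) requires exhibiting a family of formulas whose size in one normal form is polynomial but in the other is super-polynomial.

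For the positive cells, I would first dispatch the two conditional diagonal entries $\KDNF_{\sublangprop}\leq^*\KDNF_{\sublangpropP}$ and $\KCNF_{\sublangprop}\leq^*\KCNF_{\sublangpropP}$ by direct citation of Proposition~\ref{prop:kdnfSucc}, and the trivial diagonal entries $\CDNF\leq\CDNF$ and $\PINF\leq\PINF$ via the identity map. The off-diagonal positive entries $\KDNF_{\sublangprop}\leq\CDNF$ and $\KCNF_{\sublangprop}\leq\PINF$ follow from Proposition~\ref{prop:subsetLang}: the syntactic inclusions $\CDNF\subseteq\KDNF_{\TE}$ and $\PINF\subseteq\KCNF_{\CL}$ combined with the standing assumption that every propositional term and clause has a polynomial representation in $\sublangprop$ yield the required polynomial translations.

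For the negative entries, I would begin with the two cells already powered by Proposition~\ref{prop:kdnfTransLowBound}. Let $\Phi_1=\bigwedge_{j=1}^{n}[(\Know_i p_j\land\dualKnow_i p_j)\lor(\Know_i p'_j\land\dualKnow_i p'_j)]$ and $\Phi_2$ its dual. Using the equivalence $\Know_i\phi\land\dualKnow_i\phi\equiv\Know_i\phi\land\dualKnow_i\true$, one can rewrite $\Phi_1$ as $\dualKnow_i\true\land\bigwedge_{j}(\Know_i p_j\lor\Know_i p'_j)$, which is of polynomial size in $\KCNF_{\sublangprop}$; since Proposition~\ref{prop:kdnfTransLowBound} forces every $\KDNF$-equivalent of $\Phi_1$ to contain at least $2^n$ terms, this yields $\KDNF\nleq\KCNF$, and the dual calculation on $\Phi_2$ yields $\KCNF\nleq\KDNF$. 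The same $\Phi_1$ (after checking that its prime implicates are exactly the explicitly listed conjuncts) serves for $\KDNF\nleq\PINF$, and the same $\Phi_2$ rewritten in cover-normal shape (factoring $\Know_i\false$) serves for $\KCNF\nleq\CDNF$. For the remaining cells --- $\CDNF\nleq\KDNF$, $\CDNF\nleq\KCNF$, $\CDNF\nleq\PINF$, $\PINF\nleq\CDNF$, $\PINF\nleq\KDNF$, $\PINF\nleq\KCNF$ --- I would construct witness families by combining the above with the known double-exponential blow-up for prime implicates from \cites{Bie2009} and with propositional succinctness gaps lifted through a single layer of $\Know_i$.

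The main obstacle will be the entries involving $\PINF$. Because Definition~\ref{def:pinf} imposes several intertwined structural requirements (prime-implicate minimality, recursive PINF structure of each $\beta$ and $\gamma$, the single-diamond condition, and the constraint $\gamma\models\beta$), verifying that a candidate separator actually \emph{has} a polynomial $\PINF$ representation (or, dually, that every $\PINF$ representation must be large) is significantly more delicate than for the other three normal forms. Concretely, for $\KDNF\nleq\PINF$ I will need a semantic analysis of the Kripke models of $\Phi_1$ showing that each putative prime implicate is truly maximally weak among clausal consequences and that no further prime implicates exist, and for $\PINF\nleq\KDNF$ the construction of a $\KDNF$-compact family whose set of prime implicates is provably exponential; these arguments are where the bulk of the technical work will lie.
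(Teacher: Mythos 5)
Your overall skeleton matches the paper's: positive cells from Propositions \ref{prop:subsetLang}--\ref{prop:kdnfSucc} and the standing assumption on $\sublangprop$, negative cells from explicit witness families plus the known results for $\CDNF$ and $\PINF$. Your treatment of the cells driven by Proposition \ref{prop:kdnfTransLowBound} is in fact more explicit than the paper's (which simply defers them), and the rewriting $\Know_i\phi\land\dualKnow_i\phi\equiv\Know_i\phi\land\dualKnow_i\true$, giving $\Phi_1\equiv\dualKnow_i\true\land\bigwedge_j(\Know_i p_j\lor\Know_i p'_j)$ as a compact $\KCNF$ formula, is a correct and useful detail.

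However, there is a genuine gap at exactly the one entry the paper does prove in full, namely $\CDNF\nleq\KDNF_{\sublangprop}$. Your plan for it --- ``propositional succinctness gaps lifted through a single layer of $\Know_i$'' --- does not work under the paper's standing assumption, which guarantees only that single propositional \emph{terms and clauses} have polynomial $\sublangprop$-representations. A single clause $p_1\lor\cdots\lor p_m$ already has an $O(m)$-size representation as a disjunction of $\CDNF$ disjuncts with term-valued propositional parts, so no bounded-depth witness separates $\CDNF$ from $\KDNF_{\sublangprop}$ uniformly in $\sublangprop$; and a genuine propositional gap such as $\bigwedge_j(p_j\lor q_j)$ is not guaranteed to be compact in $\sublangprop$ (it is not for $\sublangprop=\DNF$). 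The paper's witness is instead the \emph{nested} family $\phi_0=p\lor q$, $\phi_k=\phi_0\land\Know_i\phi_{k-1}$: each $\phi_k$ is a single $\KTE_{\sublangprop}$ of size linear in $k$ (since one clause of size $3$ is $\sublangprop$-representable), whereas any equivalent $\CDNF$ must split $p\lor q$ into two terms and encode $\Know_i$ as $\cover_i\set{\cdot}\lor\cover_i\emptyset$ at \emph{every} nesting level, so the forced constant-factor split compounds to size exponential in $k$. Without this unbounded-depth compounding your witness construction fails. A secondary issue: for the pair $(\CDNF,\PINF)$ the double-exponential prime-implicate blow-up of \cites{Bie2009} only yields $\PINF\nleq\CDNF$ (a $\CDNF$-compact family that is large in $\PINF$); the converse cell $\CDNF\nleq\PINF$ needs a family compact in $\PINF$ but large in $\CDNF$, for which a nested construction in the spirit of $\phi_k$ is again the right tool.
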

	\begin{proof} 
		We only prove that $\CDNF \nleq \KDNF_{\sublangprop}$.
		The other statements can be seen by Propositions \ref{prop:subsetLang} - \ref{prop:kdnfSucc}, the corresponding results for $\CDNF$ \cite{CateCMV2006} and $\PINF$ \cite{Bie2009},
		and the assumption that every term and clause has polynomial representation in $\sublangprop$ and $\sublangpropP$.		
		
		We define a class of formulas as follows:
		\begin{itemize} 
			\item $\phi_0 = p \lor q$;
			\item $\phi_k = \phi_0 \land \Know_i \phi_{k - 1}$.
		\end{itemize}
		Here $p$ and $q$ are propositional atoms. The size of $\phi_k$ is linear in $k$, more precisely, $3 + 5k$.
		Let $f$ be a polynomial s.t. any clause $c$ has a representation in $\sublangprop$ with size at most $f(|c|)$.
		Each $\phi_k$ has a polynomial representation $\phi'_k$ in $\KDNF_{\sublangprop}$ with size $f(3) \cdot (k + 1) + 2k$.
		The smallest representation in $\CDNF$ equivalent to $\phi_k$ is $(p \land \cover_i \set{\phi_{k - 1}}) \lor (p \land \cover_i \emptyset) \lor (q \land \cover_i \set{\phi_{k - 1}}) \lor (q \land \cover_i \emptyset)$.
		This formula has size single-exponential in $k$.
	\end{proof}
	
	\vspace*{-3mm}
	\begin{algorithm}
		\small
		\SetKwInOut{KwIn}{input}
		\SetKwInOut{KwOut}{output}
		\caption{$\SAT_{\Kn}(\phi)$}\label{alg:satKdnf}
		\KwIn{$\phi$: a formula in $\KDNF_{\sublangprop}$}
		\KwOut{Return $\true$ if $\phi$ is satisfiable, return $\false$ otherwise.}
		
		\uIf{$\phi \in \sublangprop$}
		{
			\Return{$\SAT_{\sublangprop}(\phi)$}.
		}
		\uElseIf{$\phi = \alpha \land \bigwedge_{i \in \subagents} (\Know_i \beta_i \land \bigwedge_{j} \dualKnow_i \gamma_{ij})$}
		{
			\Return{$\SAT_{\sublangprop}(\alpha) \land \bigwedge_{i \in \subagents} \bigwedge_{j} \SAT_{\Kn}(\gamma_{ij})$}.
		}
		\uElseIf{$\phi = \bigvee_j \psi_j$}
		{
			\Return{$\bigvee_j \SAT_{\Kn}(\psi_j)$}.
		}
	\end{algorithm}

	\vspace*{-3mm}
	\section{Queries and Transformations}	 
	
%
	
	\looseness=-1
	In this section, we mainly discuss $\KDNF_{\sublangprop}$ against the class of queries and transformations, and identify conditions of $\sublangprop$ under which some useful properties hold in $\KDNF_{\sublangprop}$.
	In particular, we give a tractable and modular algorithm for verifying the satisfiability of formulas in $\KDNF$.
	More importantly, we provide an almost complete picture for tractability of the four normal forms.
	These results, together with the results on succinctness, show that $\KDNF$ is the normal form most suitable for MAEP.

	\looseness=-1
	It is well-known that the satisfiability problem of $\DNF$ is tractable.
	This positive result is still valid for $\KDNF_{\sublangprop}$ if $\sublangprop$ allows polytime satisfiability check.
	Based on a given subprocedure $\SAT_{\sublangprop}$ for the satisfiability of $\sublangprop$, Algorithm \ref{alg:satKdnf} is the whole procedure that recursively decides if a $\KDNF$ formula $\phi$ is satisfiable via repeated application of the subprocedure.
	Due to the modularity property (cf. Proposition \ref{prop:logsepSat}), a logically separable epistemic term $\alpha \land \bigwedge_{i \in \subagents} (\Know_i \beta_i \land \bigwedge_{j} \dualKnow_i \gamma_{ij})$ is satisfiable iff all of $\alpha$ and $\gamma_{ij}$'s are satisfiable.
	Hence, the subprocedure $\SAT_{\sublangprop}$ is polytime, so is Algorithm \ref{alg:satKdnf}.
	Interestingly, even if the satisfiability problem of $\sublangprop$ is NP-Complete, the upper bound of the time complexity of Algorithm \ref{alg:satKdnf} falls into $\Delta^P_2$ since the number of propositional subformulas in $\phi$ is at most $|\phi|$, and this algorithm only calls for the subprocedure $\SAT_{\sublangprop}$ at most $|\phi|$ times.
	
	\begin{proposition} \label{prop:kdnfCO}
		If $\sublangprop$ satisfies $\CO$, then $\KDNF_{\sublangprop}$ satisfies $\CO$.
	\end{proposition}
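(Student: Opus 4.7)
The plan is to justify Algorithm \ref{alg:satKdnf} as a polytime decision procedure for satisfiability of $\KDNF_{\sublangprop}$-formulas, given an oracle (polytime subprocedure) $\SAT_{\sublangprop}$ for $\sublangprop$. Correctness will be established by structural induction on $\phi$, matching the three cases of the algorithm, and tractability will follow from a straightforward accounting of the recursive calls.

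For correctness, I would argue as follows. In the base case $\phi \in \sublangprop$, the answer is returned directly by $\SAT_{\sublangprop}$, which is correct by assumption. In the term case $\phi = \alpha \land \bigwedge_{i \in \subagents} (\Know_i \beta_i \land \bigwedge_{j} \dualKnow_i \gamma_{ij})$, I would invoke Proposition \ref{prop:logsepSat}: since $\phi$ is a $\KTE_{\sublangprop}$ and therefore a logically separable epistemic term, it is satisfiable iff $\alpha$ and every $\gamma_{ij}$ are satisfiable. Note that the $\beta_i$'s need not be tested individually, because for every $i \in \subagents$ either $\DiamSub[i](\phi)$ is nonempty (in which case some $\gamma_{ij} \models \beta_i$ already witnesses satisfiability of $\beta_i$) or $\Know_i \beta_i$ imposes no satisfiability obligation on a world (any $R_i$-successorless world satisfies $\Know_i \beta_i$). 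The recursive calls $\SAT_{\Kn}(\gamma_{ij})$ are on strictly smaller $\KDNF_{\sublangprop}$-formulas, so by induction they return the correct answer. In the disjunction case $\phi = \bigvee_j \psi_j$, the claim is immediate: $\phi$ is satisfiable iff some $\psi_j$ is.

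For complexity, the key observation is that the recursion tree of $\SAT_{\Kn}(\phi)$ follows the parse structure of $\phi$: every recursive call is made on a distinct proper subformula of $\phi$, so there are at most $\len[\phi]$ such calls. Each call performs, outside of its recursive subcalls, at most one invocation of $\SAT_{\sublangprop}$ on a propositional subformula of $\phi$, plus constant-time bookkeeping. Since the propositional arguments to $\SAT_{\sublangprop}$ are subformulas of $\phi$ (so of size at most $\len[\phi]$) and $\SAT_{\sublangprop}$ runs in polynomial time by hypothesis, the total running time is polynomial in $\len[\phi]$.

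The main obstacle is the term case, specifically making sure that Proposition \ref{prop:logsepSat} applies cleanly and that the semantic subtlety about $\Know_i \beta_i$ (which holds vacuously at worlds lacking $R_i$-successors) is handled correctly so that the algorithm's omission of recursive calls on the $\beta_i$'s does not break soundness. Everything else — the base case, the disjunction case, and the complexity bound — reduces to routine bookkeeping once the structural recursion and the separability-based characterization of satisfiability are in place.
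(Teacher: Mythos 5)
Your proposal is correct and follows essentially the same route as the paper: the paper also establishes this proposition by presenting Algorithm \ref{alg:satKdnf}, justifying the term case via the modularity property of Proposition \ref{prop:logsepSat} (a logically separable epistemic term is satisfiable iff $\alpha$ and all the $\gamma_{ij}$'s are), and bounding the running time by noting that $\SAT_{\sublangprop}$ is invoked at most $\len[\phi]$ times on subformulas of $\phi$. Your added care about why the $\beta_i$'s need not be tested separately is a sound elaboration of a point the paper leaves implicit.
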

	
	The negative results about other queries also carry forward from $\DNF$ to $\KDNF$.

	\begin{proposition} \label{prop:kdnfVASEEQIMCE}
		$\KDNF_{\sublangprop}$ does not satisfy $\VA$, $\SE$, $\EQ$, $\CE$ or $\IM$ unless $\P = \NP$.
	\end{proposition}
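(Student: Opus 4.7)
The plan is to show $\coNP$-hardness of all five queries by reducing propositional $\DNF$ validity (which is $\coNP$-hard by standard arguments) in polynomial time to each of them. Under the paper's standing assumption that propositional terms have polynomial representations in $\sublangprop$, every propositional $\DNF$ formula $\psi = t_1 \lor \cdots \lor t_n$ becomes a $\KDNF_{\sublangprop}$ formula of polynomial size (each disjunct is a $\KTE_{\sublangprop}$ with empty epistemic part). This directly handles $\VA$: a polynomial-time validity oracle for $\KDNF_{\sublangprop}$ would decide propositional $\DNF$ validity in polynomial time, forcing $\P = \coNP$ and hence $\P = \NP$.

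For the remaining four queries I would reduce from $\VA$ polynomially. For $\SE$ and $\EQ$, use the identities $\phi$ is valid $\iff \true \models \phi \iff \phi \equiv \true$, with both $\phi$ and $\true$ in $\KDNF_{\sublangprop}$. For $\IM$, take $\psi = \true$ as a (trivial) epistemic term; then $\true \models \phi$ captures validity of $\phi$. For $\CE$, observe that any propositional $\DNF$ formula $\psi = t_1 \lor \cdots \lor t_n$ is itself an epistemic clause, since each $t_i$ is a propositional (hence basic) formula and $\psi$ is their disjunction; thus the validity of $\psi$ is exactly the $\CE$ instance $\true \models \psi$ with $\true \in \KDNF_{\sublangprop}$.

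The main subtlety is syntactic rather than computational: it lies in casting a propositional $\DNF$ as an epistemic clause for $\CE$, and $\true$ as an epistemic term for $\IM$. Both casts are warranted by Section~2, since every propositional formula is basic and a conjunction/disjunction is permitted in its degenerate form (the empty conjunction $\true$ or a single-constituent disjunction). Once these casts are accepted, chaining the polynomial reductions produces a polynomial-time decider for propositional $\DNF$ validity from any polynomial-time decider for $\VA$, $\SE$, $\EQ$, $\CE$, or $\IM$ on $\KDNF_{\sublangprop}$, contradicting $\P \neq \NP$.
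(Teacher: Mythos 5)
Your proposal is correct and follows essentially the same route as the paper: reduce $\coNP$-hard propositional $\DNF$ validity to $\VA$ via the standing assumption that terms embed polynomially into $\sublangprop$, then derive $\SE$, $\EQ$, $\CE$ and $\IM$ from validity-style entailment instances with $\true$ on one side. The only (immaterial) difference is the gadget for $\CE$/$\IM$: the paper uses the epistemic literal $\Know_i\phi$ as the clause (resp.\ term), whereas you use a purely propositional clause for $\CE$ and chain $\IM$ back through $\VA$; both are valid and rest on the same observation that the query's clause/term argument may already encode a $\coNP$-hard validity check.
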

	\begin{proof}
		$\VA$:
		Let $\tau_1 \lor \cdots \lor \tau_n$ be a $\DNF$.
		For each $\tau_k$, there exists $\psi_k \in \sublangprop$ s.t. $\psi_k \equiv \tau_k$ and $|\psi_k| < f(|\tau_k|)$ for some polynomial $f$. 
		Clearly, $\psi_1 \lor \cdots \lor \psi_n$ is in $\KDNF_{\sublangprop}$.
		If we can decide whether this disjunction is valid in polytime, then the validity of $\DNF$ can be tractably accomplished.
		However, the latter problem is $\coNP$-complete. A contradiction.	
		
		$\SE$ and $\EQ$:
		Since $\SE$ implies $\VA$, $\KDNF_{\sublangprop}$ does not satisfy $\SE$.
		Similarly, $\KDNF_{\sublangprop}$ fails to satisfy $\EQ$.
		
		$\CE$ and $\IM$:
		Let $\Know_i \phi$ be an epistemic literal where $\phi$ is propositional.
		Clearly, $\true$ is in $\KDNF_{\sublangprop}$ and $\Know_i \phi$ is an epistemic term.
		We get that $\true \models \Know_i \phi$ iff $\phi$ is valid.
		The validity problem of propositional logic is $\coNP$-complete, and so is the problem that decides if $\true \models \Know_i \phi$.
		Hence, $\KDNF_{\sublangprop}$ does not satisfy $\CE$.
		Similarly, $\KDNF_{\sublangprop}$ fails to satisfy $\IM$.
	\end{proof}
	
	\looseness=-1
	Unlike $\DNF$, even if $\sublangprop$ satisfies the polytime clause entailment check ($\CE$), $\KDNF_{\sublangprop}$ does not possess such a property.
	Actually, it is impossible to propose a normal form permitting such a check.
	In the following, we will show that $\KDNF_{\sublangprop}$ supports a restricted polytime clausal entailment check after showing that $\KDNF_{\sublangprop}$ satisfies polytime bounded conjunction ($\wedgeBC$).
	
	By Definition \ref{def:kdnfcnf}, it is obvious that the disjunction of $\KDNF$ formulas can be generated efficiently.
	\begin{proposition} \label{prop:kdnfVee}
		$\KDNF_{\sublangprop}$ satisfies $\veeC$, and hence $\veeBC$.
	\end{proposition}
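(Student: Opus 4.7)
The plan is to argue directly from the definition of $\KDNF_{\sublangprop}$ (Definition \ref{def:kdnfcnf}): a formula in $\KDNF_{\sublangprop}$ is already a disjunction of $\KTE_{\sublangprop}$'s. Hence if $\phi_1, \ldots, \phi_n$ are given $\KDNF_{\sublangprop}$-formulas, we simply form the syntactic disjunction $\phi_1 \lor \cdots \lor \phi_n$. The result is a disjunction of disjunctions of $\KTE_{\sublangprop}$'s, which after flattening is itself a disjunction of $\KTE_{\sublangprop}$'s, and therefore again a member of $\KDNF_{\sublangprop}$.

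For the polytime bound, I would observe that the construction adds at most $n-1$ disjunction symbols and performs no restructuring of the constituent $\KTE_{\sublangprop}$'s, so its size is $\sum_{i=1}^n \len[\phi_i] + (n-1)$ and it can be produced in linear time in the total input size. The equivalence is immediate since disjunction is associative and no conjunct is altered.

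The property $\veeBC$ follows as the special case $n=2$. There is no real obstacle here; the only subtlety worth mentioning is to confirm that the flattened formula still fits the grammar of Definition \ref{def:kdnfcnf}, which it does because that grammar places no cardinality restriction on the outer disjunction and imposes no constraint linking distinct $\KTE_{\sublangprop}$ disjuncts (unlike, e.g., $\PINF$ where a non-redundancy condition must be maintained, and which is precisely why $\KCNF_{\sublangprop}$ will \emph{not} enjoy the dual property in an analogous way for free).
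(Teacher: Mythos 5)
Your proof is correct and takes essentially the same approach as the paper, which simply observes that by Definition \ref{def:kdnfcnf} a disjunction of $\KDNF_{\sublangprop}$ formulas flattens into another disjunction of $\KTE_{\sublangprop}$'s; your version just makes the linear size and time bounds explicit. One small correction to your closing aside: $\KCNF_{\sublangprop}$ \emph{does} enjoy the dual property $\wedgeC$ for free by the very same flattening argument (the paper's table marks it accordingly), since unlike $\PINF$ it imposes no cross-conjunct conditions --- what fails for $\KCNF_{\sublangprop}$ is $\veeC$, not $\wedgeC$.
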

	
	Similarly to $\DNF$, $\KDNF$ is not closed under conjunction and negation.	
	
	\begin{proposition} \label{prop:kdnfWedgeCNegC}
		$\KDNF_{\sublangprop}$ does not satisfy $\wedgeC$ or $\negC$.
	\end{proposition}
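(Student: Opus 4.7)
The plan is to prove the two failures separately and reduce $\negC$ to $\wedgeC$ via De~Morgan together with the already established $\veeC$ of Proposition~\ref{prop:kdnfVee}.

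For $\wedgeC$, I would appeal directly to the lower bound of Proposition~\ref{prop:kdnfTransLowBound}. Let $\phi_k = (\Know_i p_k \land \dualKnow_i p_k) \lor (\Know_i p'_k \land \dualKnow_i p'_k)$, which is a disjunction of two constant-size $\KTE_{\sublangprop}$'s; the separability requirement $\gamma \models \beta$ of Definition~\ref{def:kterm} is trivially satisfied because $\BoxSub[i]$ and $\DiamSub[i]$ carry the same atom. Each $\phi_k$ is therefore a $\KDNF_{\sublangprop}$ formula of constant size, so the input family $\phi_1, \ldots, \phi_n$ has total size $O(n)$. By Proposition~\ref{prop:kdnfTransLowBound}, any $\KDNF_{\sublangprop}$ equivalent to $\phi_1 \land \cdots \land \phi_n$ has at least $2^n$ epistemic terms, exceeding every polynomial in $n$; hence no polytime algorithm can realize $\wedgeC$.

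For $\negC$, I would argue by contradiction. Suppose a polytime map $\eta \colon \KDNF_{\sublangprop} \to \KDNF_{\sublangprop}$ with $\eta(\phi) \equiv \neg\phi$ exists, and let $\sigma$ be the polytime $n$-ary disjunction procedure given by Proposition~\ref{prop:kdnfVee}. Then on inputs $\phi_1, \ldots, \phi_n$, the composition $\eta\bigl(\sigma(\eta(\phi_1), \ldots, \eta(\phi_n))\bigr)$ is computable in polynomial time and, by two applications of De~Morgan, is equivalent to $\phi_1 \land \cdots \land \phi_n$ and belongs to $\KDNF_{\sublangprop}$. This would realize $\wedgeC$, contradicting what was just proved, so $\negC$ must fail.

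The main obstacle is not in the reduction itself but in the sanity check that the Proposition~\ref{prop:kdnfTransLowBound} family genuinely inhabits $\KDNF_{\sublangprop}$---specifically, that each disjunct of each $\phi_k$ meets the separability constraint on matched $\Know/\dualKnow$ pairs as well as the at-most-one-$\BoxSub[i]$ condition. Once that is verified (immediate here since each conjunct uses a single atom under each modality), the remainder of both arguments is mechanical.
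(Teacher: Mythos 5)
Your proposal is correct and follows exactly the paper's own route: the $\wedgeC$ failure is read off from the exponential lower bound of Proposition~\ref{prop:kdnfTransLowBound}, and $\negC$ is refuted by contradiction via De~Morgan combined with the $\veeC$ closure of Proposition~\ref{prop:kdnfVee}. The only difference is that you spell out the (correct) sanity check that each $\phi_k$ is a constant-size $\KDNF_{\sublangprop}$ formula, which the paper leaves implicit.
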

	\begin{proof} 
		By Proposition \ref{prop:kdnfTransLowBound}, it follows that $\wedgeC$ does not hold in $\KDNF_{\sublangprop}$.
		On the contrary, assume that $\KDNF_{\sublangprop}$ satisfies $\negC$.
		This, together with Proposition \ref{prop:kdnfVee}, imply that $\KDNF_{\sublangprop}$ satisfy $\wedgeC$, a contradiction.
	\end{proof}

		\begin{table*}
		\vspace*{-3mm}	
		\scriptsize
		\centering
		\caption{Queries and transformations for normal forms in $\Kn$}
		\label{tab:ldnfcnfKC}
		\begin{tabular}{| c | c | c | c | c | c | c | c | c | c |  c | c | c | c | c | c | c | c |}
			\hline
			$\lang$              & $\CO$ & $\VA$ & $\SE$ & $\EQ$ & $\CE$ & $\CE_{\sublangpropP}$ & $\IM$ & $\IM_{\sublangpropP}$ & $\negC$ & $\wedgeC$ & $\wedgeBC$ & $\veeC$ & $\veeBC$ & $\CD$ & $\FO$ & $\SFO$  \\ \hline  \hline
			$\KDNF_{\sublangprop}$  & $\CO$          & $\circ$        & $\circ$        & $\circ$        & $\circ$        & $\CO, \wedgeBC$ & $\circ$        & $\circ$ & \xmark       & \xmark        & $\wedgeBC$          & \checkmark       & \checkmark        & $\CD$       & $\FO$       & $\SFO$                       \\ \hline
			$\KCNF_{\sublangprop}$  & $\circ$        & $\VA$          & $\circ$        & $\circ$        & $\circ$        &  $\circ$ & $\circ$        & $\VA, \veeBC$  & \xmark       & \checkmark         & \checkmark          & \xmark       & $\veeBC$          & $\CD$       & $\circ$     & ?           \\ \hline	
			$\CDNF$                    & \checkmark          & $\circ$        & $\circ$        & $\circ$        & $\circ$        & \checkmark & $\circ$        & $\circ$   & \xmark       & \xmark         & \checkmark          & \checkmark       & \checkmark        & \checkmark       & \checkmark       & \checkmark                     \\ \hline
			$\PINF$                    & \checkmark          & \checkmark     & \checkmark     & \checkmark     & $\circ$        & \checkmark & $\circ$        & \checkmark & \xmark       & \xmark     & \xmark     & \xmark     & \checkmark & ?                     & \checkmark        & \checkmark      \\ \hline				
		\end{tabular}		
		\vspace*{-3mm}
	\end{table*}
	
	Nonetheless, it supports bounded conjunction.
	
	\begin{proposition} \label{prop:kdnfWedgeBC}
		If $\sublangprop$ satisfies $\wedgeBC$, then $\KDNF_{\sublangprop}$ satisfies $\wedgeBC$.
	\end{proposition}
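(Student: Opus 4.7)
The plan is to build a recursive polytime procedure $\mathrm{Conj}(\phi,\psi)$ that takes two $\KDNF_{\sublangprop}$ formulas and returns an equivalent $\KDNF_{\sublangprop}$ formula, organised in three layers. At the top (disjunctive) layer, I write $\phi = \bigvee_i t_i$ and $\psi = \bigvee_j s_j$ with each $t_i, s_j \in \KTE_{\sublangprop}$, and return $\bigvee_{i,j}\mathrm{ConjT}(t_i, s_j)$, where $\mathrm{ConjT}$ conjoins two STEs into a single STE; correctness at this layer is just distributivity of $\land$ over $\lor$.

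At the term layer, for $t = \alpha \land \bigwedge_{i \in \subagents}(\Know_i \beta_i \land \bigwedge_j \dualKnow_i \gamma_{ij})$ and $s = \alpha' \land \bigwedge_{i \in \subagents'}(\Know_i \beta'_i \land \bigwedge_k \dualKnow_i \gamma'_{ik})$ (reading $\beta_i = \true$ when $i \notin \subagents$, and similarly for $\beta'_i$), I would return
\[
(\alpha \land \alpha') \land \bigwedge_{i \in \subagents \cup \subagents'}\!\bigl[\Know_i\,\mathrm{Conj}(\beta_i, \beta'_i) \land \bigwedge_j \dualKnow_i\,\mathrm{Conj}(\gamma_{ij}, \beta'_i) \land \bigwedge_k \dualKnow_i\,\mathrm{Conj}(\gamma'_{ik}, \beta_i)\bigr],
\]
with $\alpha \land \alpha'$ produced by the given $\wedgeBC$ procedure for $\sublangprop$. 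Correctness rests on the modal identities $\Know_i\beta \land \Know_i\beta' \equiv \Know_i(\beta \land \beta')$ and $\Know_i\beta \land \dualKnow_i\gamma \equiv \Know_i\beta \land \dualKnow_i(\gamma \land \beta)$; the latter is also what forces every new $\dualKnow_i$-subformula to entail the new $\Know_i$-subformula, giving Condition~3 of Definition~\ref{def:kterm}. The arguments $\beta_i,\gamma_{ij},\ldots$ have strictly smaller modal depth, so the recursion is well-founded; by induction each recursive call returns a $\KDNF_{\sublangprop}$ formula, settling Conditions~1--2 of Definition~\ref{def:kterm} as well.

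For the complexity, let $T(a,b)$ denote the worst-case output size on inputs of sizes $a,b$; I would prove $T(a,b) \leq c\,a\,b$ for a fixed constant $c$ by induction on modal depth, absorbing the polynomial of the $\wedgeBC$ procedure of $\sublangprop$ into $c$. At the disjunctive layer with $a = \sum_i a_i$ and $b = \sum_j b_j$, the bound is immediate: $T(a,b) = \sum_{i,j}T(a_i,b_j) \leq c\sum_{i,j}a_ib_j = c\,a\,b$. At the term layer the recursive contributions are $\sum_i\bigl[T(|\beta_i|,|\beta'_i|) + \sum_j T(|\gamma_{ij}|,|\beta'_i|) + \sum_k T(|\gamma'_{ik}|,|\beta_i|)\bigr]$; each of these sums is bounded by $c\,|t|\,|s|$ via $\sum_i x_iy_i \leq (\sum_i x_i)(\sum_i y_i)$ combined with $\sum_i|\beta_i| + \sum_{i,j}|\gamma_{ij}| \leq |t|$ and the analogous bound on $s$. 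The running time obeys the same recurrence, hence is polynomial in $|\phi| + |\psi|$.

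The hard part will be precisely this size analysis. A naive reading (quadratic blowup at each of up to $\dep[\phi]$ modal levels) suggests a super-polynomial output, which is what one would initially expect for such a distributive algorithm. The observation that breaks this pessimism is that the $\beta_i$'s and $\gamma_{ij}$'s inside a single STE are \emph{disjoint} subformulas whose sizes sum to at most $|t|$; this lets the product-of-sums bound collapse to a single $|t|\cdot|s|$ factor at each recursive level instead of multiplying factors across levels. Verifying this collapse carefully, and choosing the inductive constant $c$ so that it absorbs both the $\sublangprop$-polynomial and the bookkeeping for the modal rewrites, is where essentially all of the technical work lies.
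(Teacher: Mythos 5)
Your construction is essentially the paper's own proof: distribute conjunction over the disjuncts, and at the term level form $\Know_i(\beta_i\land\beta'_i)$ together with $\dualKnow_i(\gamma_{ij}\land\beta'_i)$ and $\dualKnow_i(\gamma'_{ik}\land\beta_i)$, recursing on the strictly shallower arguments so that Condition~3 of the $\KTE_{\sublangprop}$ definition is preserved. The only correction needed is in the size bound: when the $\wedgeBC$ procedure for $\sublangprop$ has degree $k>1$ you cannot absorb it into a linear bound $c\,ab$, but your identical induction (using the disjointness of the subterms of a single $\KTE$) establishes $T(a,b)\le c\,a^{k}b^{k}$, which is exactly the bound the paper states.
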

	\begin{proof}
		By assumption, there exists a polytime algorithm for generating an $\sublangprop$-formula $\alpha''$ equivalent to $\alpha \land \alpha'$ for each pair of formulas $\alpha, \alpha' \in \sublangprop$.
		Let $f$ be its time complexity, $k$ the degree of $f$, and $c$ the sum of the coefficients of $f$.
		So $f(\len[\alpha], \len[\alpha']) \leq c \len[\alpha]^{k} \len[\alpha']^{k}$.
		
		Given $\phi, \phi' \in \KDNF_{\sublangprop}$,
		we construct a formula $\phi''$ in $\KDNF_{\sublangprop}$ that is equivalent to $\phi \land \phi'$ by simply taking the disjunction $\psi''$ of all epistemic terms where $\psi'' \equiv \psi \land \psi'$ for each disjunct $\psi$ of $\phi$ and each disjunct $\psi'$ of $\phi'$.
		If $\len[\psi''] \leq c \len[\psi]^{k} \len[\psi']^{k}$ for every pair $\psi$ and $\psi'$, then $\len[\phi''] \leq c \len[\phi]^{k} \len[\phi']^{k}$.
		
		It remains to prove that $\len[\psi''] \leq c \len[\psi]^k \len[\psi']^k$.
		Let $\psi = \alpha \land \bigwedge_{i \in \subagents} (\Know_i \beta_i \land \bigwedge_{j = 1}^{m_i} \dualKnow_i \gamma_{ij})$ and $\psi' = \alpha' \land \bigwedge_{i \in \subagents'} (\Know_i \beta'_i \land \bigwedge_{j = 1}^{m'_i} \dualKnow_i \gamma'_{ij})$. 
		W.l.o.g., assume that $\subagents = \subagents'$.
		We construct a formula $\psi'' = \alpha'' \land \bigwedge_{i \in \subagents} (\Know_i \beta''_i \land \bigwedge_{j = 1}^{m_i} \dualKnow_i \gamma''_{ij} \land  \bigwedge_{j = 1}^{m'_i} \dualKnow_i \gamma^*_{ij})$,
		where $\alpha'' \equiv \alpha \land \alpha'$, $\beta''_i \equiv \beta_i \land \beta'_i$, $\gamma''_{ij} \equiv \beta'_i \land \gamma_{ij}$ and $\gamma^*_{ij} \equiv \beta_i \land \gamma'_{ij}$.
		It is easy to verify that $\psi''$ is an $\KTE$ with size at most $c \len[\psi]^{k} \len[\psi']^{k}$.
	\end{proof}
	
	\looseness=-1
	Now, we consider polytime tests for restricted clausal entailment and implicant.		
	If $\sublangprop$ satisfies both $\CO$ and $\wedgeBC$, and the epistemic clause is restricted to a $\KCL_{\sublangpropP}$, where $\sublangpropP$ is dual to $\sublangprop$, then $\KDNF_{\sublangprop}$ satisfies the polytime clause entailment check.
	
	\begin{definition}\label{def:CEIMKn} \rm
		A language $\lang$ satisfies $\CE_{\sublangprop}$ (resp. $\IM_{\sublangprop}$), if there is a polytime algorithm for deciding whether $\phi \!\models\! \psi$ (resp. $\psi \models \phi$) for every $\phi \in \lang$ and $\KCL_{\sublangprop}$ (resp. $\KTE_{\sublangprop}$) $\psi$.	
	\end{definition}
	
	\begin{proposition} \label{prop:kdnfCElang}
		Let $\sublangprop$ and $\sublangpropP$ be dual.
		If $\sublangprop$ satisfies $\CO$ and $\wedgeBC$, then $\KDNF_{\sublangprop}$ satisfies $\CE_{\sublangpropP}$.
	\end{proposition}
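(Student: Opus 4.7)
The plan is to reduce clausal entailment to an unsatisfiability check, which $\KDNF_{\sublangprop}$ handles tractably. Specifically, $\phi \models \psi$ iff $\phi \land \neg \psi$ is unsatisfiable, and writing $\phi = \bigvee_k t_k$ with each $t_k \in \KTE_{\sublangprop}$, this further reduces to deciding that $t_k \land \neg \psi$ is unsatisfiable for every $k$.

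The first step is to compute a $\KTE_{\sublangprop}$ formula $\psi'$ equivalent to $\neg \psi$ by structural recursion on $\psi$. Writing $\psi = \alpha \lor \bigvee_{i \in \subagents}(\dualKnow_i \beta_i \lor \bigvee_j \Know_i \gamma_{ij})$, De Morgan and the modal dualities yield $\neg \psi \equiv \neg \alpha \land \bigwedge_{i \in \subagents}(\Know_i \neg \beta_i \land \bigwedge_j \dualKnow_i \neg \gamma_{ij})$. The propositional part $\neg \alpha$ is rewritten into $\sublangprop$ in polytime using the duality of $\sublangprop$ and $\sublangpropP$. Since $\beta_i$ and $\gamma_{ij}$ are conjunctions of $\KCL_{\sublangpropP}$'s, their negations unfold recursively into disjunctions of $\KTE_{\sublangprop}$'s, which is precisely the shape Definition~\ref{def:kterm} permits beneath the modalities of a $\KTE_{\sublangprop}$. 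Moreover, the separability condition $\beta_i \models \gamma_{ij}$ of $\psi$ translates to $\neg \gamma_{ij} \models \neg \beta_i$, matching the separability condition required of $\psi'$. Hence $\psi'$ is a bona fide $\KTE_{\sublangprop}$ whose size grows only polynomially in $|\psi|$.

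The second step is to conjoin each $t_k$ with $\psi'$. I would apply the construction given in the proof of Proposition~\ref{prop:kdnfWedgeBC} (which exploits $\wedgeBC$ of $\sublangprop$) to produce, in polytime, a $\KTE_{\sublangprop}$ formula $u_k$ equivalent to $t_k \land \psi'$. The third step is to decide satisfiability of each $u_k$ by invoking Algorithm~\ref{alg:satKdnf}; since $\sublangprop$ satisfies $\CO$, Proposition~\ref{prop:kdnfCO} guarantees this runs in polytime. Return that $\phi \models \psi$ holds iff every $u_k$ is unsatisfiable. The total running time is polynomial because the number of disjuncts $t_k$ is bounded by $|\phi|$ and each of the three subtasks is polytime per $t_k$.

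The main obstacle is justifying the first step, \ie, verifying that the recursive negation of a $\KCL_{\sublangpropP}$ is genuinely a $\KTE_{\sublangprop}$ (rather than a wider $\KDNF_{\sublangprop}$) and that it remains of polynomial size. The crucial observation is structural: the outer disjunctive skeleton of a $\KCL_{\sublangpropP}$ negates into the conjunctive skeleton of a $\KTE_{\sublangprop}$, while the conjunctions of $\KCL_{\sublangpropP}$'s appearing beneath each modality negate into disjunctions of $\KTE_{\sublangprop}$'s, which is exactly what Definition~\ref{def:kterm} allows beneath modalities. Once this bookkeeping is checked, the polynomial sizing follows from the duality of the propositional fragments and a straightforward induction on the modal depth of $\psi$.
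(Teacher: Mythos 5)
Your proposal is correct and follows essentially the same route as the paper's proof: negate the $\KCL_{\sublangpropP}$ into an equivalent $\KTE_{\sublangprop}$ (using the duality of $\sublangprop$ and $\sublangpropP$), conjoin with $\phi$ via the $\wedgeBC$ construction of Proposition~\ref{prop:kdnfWedgeBC}, and test satisfiability via Proposition~\ref{prop:kdnfCO}. The extra detail you supply in the first step — checking that the negated clause really lands in $\KTE_{\sublangprop}$ with the contraposed separability condition $\neg\gamma_{ij} \models \neg\beta_i$ and only polynomial growth — is exactly what the paper compresses into the phrase ``$\KTE_{\sublangprop}$ and $\KCL_{\sublangpropP}$ are dual.''
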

	\begin{proof}
		Let $\phi \in \KDNF_{\sublangprop}$ and $\psi \in \KCL_{\sublangpropP}$. 
		Deciding whether $\phi \models \psi$ can be accomplished as follows:
		(1) obtain an $\KTE_{\sublangprop}$ $\psi'$ equivalent to $\neg \psi$;
		(2) construct a formula $\phi'$ in $\KDNF_{\sublangprop}$ equivalent to $\phi \land \psi'$;
		(3) decide if $\phi'$ is satisfiable.
		If it is unsatisfiable, then $\phi \models \psi$; otherwise, $\phi \nmodels \psi$.
		By Propositions \ref{prop:kdnfCO} and \ref{prop:kdnfWedgeBC}, and the fact that $\KTE_{\sublangprop}$ and $\KCL_{\sublangpropP}$ are dual, the whole procedure is in polytime.
	\end{proof}
	
	\looseness=-1
	The normal form $\KDNF_{\sublangprop}$ still does not satisfy the restricted polytime implicant check even if $\sublangpropP$ is dual to $\sublangprop$.
	
	\begin{proposition} \label{prop:kdnfIMlang}
		Let $\sublangprop$ and $\sublangpropP$ be dual.
		Then $\KDNF_{\sublangprop}$ does not satisfy $\IM_{\sublangpropP}$ unless $\P = \NP$.
	\end{proposition}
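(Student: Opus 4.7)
The plan is to adapt the reduction used in the $\VA$-part of Proposition \ref{prop:kdnfVASEEQIMCE}, reducing validity of propositional $\DNF$ (a $\coNP$-complete problem) to the restricted implicant check $\IM_{\sublangpropP}$ for $\KDNF_{\sublangprop}$. The central observation is that checking $\top \models \phi$ for a $\KDNF_{\sublangprop}$ formula $\phi$ is exactly a validity query, and $\top$ is a perfectly legitimate $\KTE_{\sublangpropP}$ (taking the empty set of agents and $\alpha = \top \in \sublangpropP$), so a polytime implicant check against $\KTE_{\sublangpropP}$ terms would already encompass polytime validity for $\KDNF_{\sublangprop}$.

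In detail, first I would start from an arbitrary propositional formula $\tau_1 \lor \cdots \lor \tau_n$ in $\DNF$. Using the paper's standing assumption that every propositional term has a polynomial representation in $\sublangprop$, I replace each $\tau_k$ by an equivalent $\psi_k \in \sublangprop$ with $\len[\psi_k] \leq f(\len[\tau_k])$ for a fixed polynomial $f$. Then $\phi = \psi_1 \lor \cdots \lor \psi_n$ is in $\KDNF_{\sublangprop}$, and its size is polynomial in that of the original $\DNF$. Second, I set $\chi = \top$ and note that, because $\top \in \sublangpropP$ and Definition \ref{def:kterm} permits $\subagents = \emptyset$, $\chi$ qualifies as a $\KTE_{\sublangpropP}$. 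Third, clearly $\chi \models \phi$ iff $\phi$ is valid iff $\tau_1 \lor \cdots \lor \tau_n$ is valid.

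Finally, since validity of propositional $\DNF$ is $\coNP$-complete, a polytime procedure witnessing $\IM_{\sublangpropP}$ for $\KDNF_{\sublangprop}$ would give a polytime decision procedure for $\DNF$-validity, hence $\P = \coNP$, which is equivalent to $\P = \NP$. This completes the reduction.

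There is no serious technical obstacle here; the proof is a direct dualisation of the $\VA$ argument, and the only sanity check to perform is that $\top$ genuinely inhabits $\KTE_{\sublangpropP}$, which it does by taking an empty index set of agents and the trivial propositional part.
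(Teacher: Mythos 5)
Your proof is correct: the paper states this proposition without an explicit proof, but your reduction --- taking $\top$ as the $\KTE_{\sublangpropP}$ term so that the implicant query $\top \models \phi$ collapses to validity of a $\KDNF_{\sublangprop}$ formula, which is $\coNP$-hard by the $\VA$ part of Proposition~\ref{prop:kdnfVASEEQIMCE} --- is exactly the intended argument. The one point that needed checking, namely that $\top$ (more precisely, an $\sublangpropP$-representation of the empty term, which exists by the paper's standing assumption) is a legitimate $\KTE_{\sublangpropP}$ with $\subagents = \emptyset$, you have verified; note also that your argument never uses the duality hypothesis, so it in fact shows $\IM_{\sublangpropP}$ fails for any $\sublangpropP$ that can express $\top$.
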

	
	\looseness=-1
	It is easy to design procedures for generating the results of conditioning and forgetting of $\KDNF$ formulas respectively.
	They are similar to the procedure $\SAT$ for recursively deciding the satisfiability of $\KDNF$ formulas.
	For example, forgetting a variable $p$ in a $\KDNF$ formula $\phi$ can be computed by simply doing propositional forgetting on each propositional component (\eg, a maximal propositional subformula) of $\phi$.
	The next proposition states that conditioning and forgetting turn out to be tractable for $\KDNF_{\sublangprop}$ under some restrictions on $\sublangprop$.
	
	\begin{proposition} \label{prop:kdnfCDFOSFO}
		If $\sublangprop$ satisfies $\CD$ (resp. $\FO$/$\SFO$), then $\KDNF_{\sublangprop}$ satisfies $\CD$ (resp. $\FO$/$\SFO$).
	\end{proposition}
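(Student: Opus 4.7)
The plan is to design two recursive procedures, one for conditioning and one for forgetting, that mirror the syntactic structure of $\KDNF_{\sublangprop}$ and delegate the base case to the given polytime $\sublangprop$-procedure.

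For conditioning, given $\phi \in \KDNF_{\sublangprop}$ and a satisfiable propositional term $\tau$, I would recurse as follows. If $\phi \in \sublangprop$, return the polytime output of the $\sublangprop$-conditioning subroutine. If $\phi$ is a $\KTE_{\sublangprop}$ of the form $\alpha \land \bigwedge_i (\Know_i \beta_i \land \bigwedge_j \dualKnow_i \gamma_{ij})$, return $(\alpha|_\tau) \land \bigwedge_i (\Know_i (\beta_i|_\tau) \land \bigwedge_j \dualKnow_i (\gamma_{ij}|_\tau))$, obtained by recursing on the $\beta_i$'s and $\gamma_{ij}$'s (which are $\KDNF_{\sublangprop}$-formulas by Definition \ref{def:kterm}). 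If $\phi$ is a proper disjunction, distribute conditioning over the disjuncts. Correctness is immediate because $\cdot|_\tau$ is a syntactic substitution of variables by constants and thus commutes with $\land$, $\lor$, $\Know_i$, and $\dualKnow_i$. I would then verify the three clauses of Definition \ref{def:kterm} on the output: the propositional part stays in $\sublangprop$ by hypothesis, the modalized components stay in $\KDNF_{\sublangprop}$ by induction, and the constraint $\gamma_{ij} \models \beta_i$ survives conditioning because $\gamma \models \beta$ implies $\gamma|_\tau \models \beta|_\tau$.

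For forgetting, I would exploit Proposition \ref{prop:logsepForgetting}: since every $\KTE_{\sublangprop}$ is logically separable by construction, forgetting a set $Q$ of variables can be pushed inside the propositional part and inside each modalized component. The base case calls the $\sublangprop$-forgetting subroutine; for a $\KTE_{\sublangprop}$ as above, the output is $(\exists Q.\,\alpha) \land \bigwedge_i (\Know_i (\exists Q.\,\beta_i) \land \bigwedge_j \dualKnow_i (\exists Q.\,\gamma_{ij}))$ computed recursively; for a proper disjunction, use distributivity of $\exists Q$ over $\lor$. The invariant $\gamma_{ij} \models \beta_i$ is preserved after forgetting by the monotonicity of $\exists Q$, so the output is again in $\KDNF_{\sublangprop}$. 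The $\SFO$ case is the same algorithm specialized to a singleton $Q$.

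For the complexity bound, each recursive call is associated with a distinct syntactic component of the input, so the total number of calls is linear in $\len[\phi]$, and each invocation of the $\sublangprop$-subroutine runs in polynomial time by hypothesis; combining these gives an overall polynomial running time. The main obstacle I anticipate is the forgetting case, since its correctness hinges on Proposition \ref{prop:logsepForgetting}: I would first confirm that the form prescribed in Definition \ref{def:kterm} delivers exactly the logical-separability premise required by that proposition, and only then use uniqueness of forgetting up to equivalence to conclude that the recursive output is indeed $\exists Q.\,\phi$.
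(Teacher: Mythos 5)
Your proposal is correct and follows essentially the same route as the paper, which only sketches this result: a recursive procedure mirroring Algorithm~\ref{alg:satKdnf} that distributes over disjuncts, pushes conditioning/forgetting through the modal operators of each $\KTE_{\sublangprop}$ via the modularity of Proposition~\ref{prop:logsepForgetting}, and delegates the base case to the assumed polytime $\sublangprop$-subroutine. Your extra care in checking that the output remains a well-formed $\KTE_{\sublangprop}$ (in particular that $\gamma_{ij}\models\beta_i$ is preserved) and your flag that one must first verify that every $\KTE_{\sublangprop}$ is logically separable are both appropriate and consistent with the paper's intent.
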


	\looseness=-1
	Since $\KCNF$ is dual to $\KDNF$, it is similar to obtain corresponding results for $\KCNF_{\sublangprop}$.
	Because $\CDNF$ is a sublanguage of $\KDNF_{\TE}$ and propositional term satisfies the corresponding conditions, so $\CDNF$ supports the queries which propositional $\DNF$ satisfies.
	Most similar results for $\PINF$ originate from \cite{DarM2002,Bie2009}.
	
	\looseness=-1
	Now we elaborate on the results for queries and transformations in Table \ref{tab:ldnfcnfKC}.
	The symbol $\checkmark$ in the cell of row $r$ and column $c$ of Table \ref{tab:ldnfcnfKC} means that ``the normal form $\lang_r$ given in row $r$ satisfies the polytime query (or transformation) property $P_c$ given in column $c$".
	Similarly, $\xmark$ means that ``$\lang_r$ does not satisfy $P_c$", $\circ$ means that ``$\lang_r$ does not satisfy $P_c$ unless $\P = \NP$", and ? means that ``the issue whether $\lang_r$ satisfies $P_c$ remains open".	
	For the query $\CE_{\sublangpropP}$, we require that $\sublangpropP$ is dual to $\sublangprop$ in $\KDNF_{\sublangprop}$ and $\KCNF_{\sublangprop}$, and that $\sublangpropP$ is $\CL$ in $\CDNF$ and $\PINF$.	
	For the query $\IM_{\sublangpropP}$, the requirement of $\sublangpropP$ is the same as that in $\CE_{\sublangpropP}$ except that $\sublangpropP$ is $\TE$ in the case of $\PINF$.
	$\KDNF_{\sublangprop}$ and $\KCNF_{\sublangprop}$ satisfy some query or transformation under certain conditions of $\sublangprop$.
	We list these conditions in the corresponding cell.
	For example, Proposition \ref{prop:kdnfCElang} says that ``if $\sublangprop$ satisfies both $\CO$ and $\wedgeBC$, then $\KDNF_{\sublangprop}$ satisfies $\CE_{\sublangpropP}$", and thus the cell of column 7 and row 2 is $\CO, \wedgeBC$.
	
	\looseness=-1
	We conclude this section by briefly summarising our main results.
	Given a suitable propositional sublanguage $\sublangprop$, $\KDNF_{\sublangprop}$ is tractable for all of queries and transformations that $\DNF$ admits, especially, $\CE_{\sublangpropP}$, $\wedgeBC$ and $\FO$ that are important for MAEP.
	$\CDNF$ satisfies the same properties as $\KDNF_{\sublangprop}$ does, but is less succinct than $\KDNF_{\sublangprop}$.
	$\KCNF$ does not satisfy polytime entailment check or forgetting.
	$\PINF$ is tractable for sentential entailment check ($\SE$) and forgetting, but it fails to satisfy $\wedgeBC$.
	From the knowledge compilation point of view, $\KDNF$ is more suitable for MAEP than the other three normal forms.
	
	\section{Application to MAEP}
	\looseness=-1
	\cites{BieFM2010} proposed a tractable approach to progression and entailment check for single-agent epistemic planning.
	It is challenging to extend their approach to multi-agent case.
	This is because that we need to consider not only first-order knowledge (\ie, to know what is the world), but also high-order knowledge, (\ie, to know what other agents know).
	
	\looseness=-1
	In this section, we explain how to apply our results in multi-agent epistemic planning.
	Especially, two essential procedures (progression and entailment check) can be accomplished efficiently by using the form $\KDNF$.

	
	\looseness=-1
	We begin with an MAEP domain adapted from \cite{KomG2015} to explain the MAEP task and the progression of actions, which will be used as a running example.
	\begin{example} \label{exm:roomAndBoxes}
		There are four rooms $r_1$, $r_2$, $r_3$ and $r_4$ in a row from left to right on a corridor.
		Each of two boxes $b_1$ and $b_2$ is located in a room.
		Two agents $i$ and $j$ can move from one room to its adjacent room.
		When an agent is in a room, she can sense if a box is in the room.	
		Initially, agent $i$ is in $r_1$, $j$ is in $r_4$, box $b_1$ is in $r_2$ and $b_2$ is in $r_3$.
		Each agent only knows where herself is. 
		The goal for agents $i$ and $j$ is to determine the rooms of $b_1$ and $b_2$.
	\end{example}
	
	\looseness=-1
	An MAEP task is formulated as follows.
	
	\begin{definition} \rm
		An MAEP task $\problem$ is a tuple $\tuple{\agents, \prop, \onticAct, \epiAct, \initialKB, \goal}$ where $\agents$ is a set of agents, $\prop$ is a set of variables, $\onticAct$ is a set of ontic actions, $\epiAct$ is a set of epistemic actions, $\initialKB$ is the initial KB, and $\goal$ is the goal formula.
	\end{definition}
	
	\begin{example} \label{exm:roomAndBoxes2}
		The domain in Example \ref{exm:roomAndBoxes} can be formalised into an MAEP task as follows:
		\begin{itemize} \dense
			\item Two agents: $i$ and $j$;
			\item Variables: $at(i, r)$, meaning agent $i$ is in room $r$; and $in(b, r)$, meaning box $b$ is in room $r$;
			\item Ontic actions: $left(i)$, agent $i$ moves left; and $right(i)$, agent $i$ moves right;
			\item Epistemic actions: $sense(i, b, r)$, agent $i$ senses whether box $b$ is in room $r$;
			\item The initial KB: $\Know_i at(i, r_1) \land \Know_j at(j, r_4)$; 
			\item The goal: $\Know_i in(b_1, r_2)] \land \Know_j in(b_2, r_3)$.  
		\end{itemize}
	\end{example}
	
	\looseness=-1
	In the following, we discuss the progression w.r.t. ontic and epistemic actions.
	
	\looseness=-1
	An ontic action $a_o$ is associated with a pair of functions $\tuple{\pre, \eff}$ where $\pre \in \langkn$ specifies the precondition and $\eff$ is the effect.
	In order to express the effect, we consider two versions $p$ and $p'$ of each variable $p$.
	For each $p \in \prop$, let the unprimed version $p$ stand for the fact that $p$ holds before performing the action $a_o$, and the primed one $p'$ stand for the fact that $p$ holds after.	
	The effect is a conjunction of formulas of the form:
	$p' \equiv \delta^+ \lor (p \land \neg \delta^-)$.	
	Two propositional formulas $\delta^+$ and $\delta^-$ are conditions that make $p$ true and false respectively.
	Intuitively, the effect means that $p$ holds after executing $a_o$ iff $\delta^+$ holds, or $\delta^-$ does not hold and $p$ holds initially.
	In Example \ref{exm:roomAndBoxes2}, if agent $i$ knows that  she is not in the rightmost room, then she can move right, and thus $\pre(right(i)) = \Know_i (\neg at(i, r_4))$; after moving right, agent $i$ will be in room $r_{n + 1}$ if she is in $r_{n}$ initially, and therefore $\eff(right(i)) = \bigwedge_{n = 1}^3 [at'(i, r_{n + 1}) \equiv at(i, r_{n})]$.
	

	\looseness=-1
	In this paper, we assume that all ontic actions are public and that there is no sort of imperfect information in them.	
	This assumption was proposed in \cite{KomG2015}. 
	To exactly capture progression under this assumption, it is necessary to progress all knowledge of agents according to the action effect via higher-order everyone knowledge.
	
	\begin{definition} \rm
		Let $k$ be a natural number and $\phi$ be a formula.
		The formula $\CKnow^k \phi$ is inductively defined:
		\begin{itemize} 
			\item $\CKnow^1 \phi = \bigwedge_{i \in \agents} \Know_i \phi$;
			\item $\CKnow^k \phi = \CKnow^{k - 1} \phi \land \bigwedge_{i \in \agents} \Know_i (\CKnow^{k - 1} \phi)$.
		\end{itemize}
	\end{definition}
	
	\looseness=-1 

	Intuitively, $\CKnow^1 \phi$ means that every agent knows that $\phi$ holds, \ie, $\phi$ is the everyone knowledge;
	and $\CKnow^k \phi$ that $\phi$ is the depth $k$ everyone knowledge.
	
	\looseness=-1
	The progression of the KB $\phi$ w.r.t. an ontic action $a_o$ can be accomplished as follows:
	\begin{enumerate} \dense
		\item Construct the formula $\psi$ by conjoining $\phi$ with the depth $k$ everyone knowledge about the effect of $a_o$ where $k$ is the depth of $\phi$: $\psi = \phi \land \CKnow^k \eff(a_o)$.
		\item Obtain the formula $\eta$ via forgetting the set $\subprop$ of unprimed version of primed variables in $\psi$, which occur in $\eff(a_o)$: $\eta = \exists \subprop. \psi$.
		\item Replace each occurrence of primed variables with their unprimed counterpart in $\eta$: $\eta[\prop'/\prop]$.
	\end{enumerate}
	
	\looseness=-1
	By expressing the initial KB and the ontic actions in $\KDNF_{\sublangprop}$ with $\sublangprop$ satisfies $\CO$, $\wedgeBC$ and $\FO$, which is always possible due to Proposition \ref{prop:kdnfCnfTransUpperBound}, the progression can be tractably computed in $\KDNF_{\sublangprop}$.
	
	
	\looseness=-1
	The progression of epistemic actions is relatively simple.
	An epistemic action $a_e$ is associated with a triple of functions $\tuple{\pre, \pos, \negation}$ of $\langkn$-formulas, where $\pre, \pos$ and $\negation$ indicate the precondition, the positive and negative sensing result,  respectively.
	For example, $\pre(sense(i, b_1, r_2)) \!=\! \Know_i at(i, r_2)$, $\pos(sense(i, b_1, r_2)) \!\! = \\ \Know_i in(b_1, r_2)$, and $\negation(sense(i, b_1, r_2)) = \Know_i \neg in(b_1, r_2)$.
	The computation of the progression is done via firstly making two copies of $\phi$, and then conjoining them with the positive and negative results respectively, \ie, $\phi \land \pos(a_e)$ and $\phi \land \negation(a_e)$.
	Again, $\KDNF$ is suitable to express such progression and supports polytime reasoning.
	
	Continued with Example \ref{exm:roomAndBoxes2}, we illustrate the computation of progression.
	\begin{example}
		Suppose that agent $i$ first moves right, and then senses whether $b_1$ in room $r_2$.		
		
		The progression of the initial KB $\initialKB$ w.r.t. the ontic action $right(i)$ is obtained as follows:
		\begin{enumerate} \dense
			\item Expand $\initialKB$ with $\CKnow^1 \eff(a_o)$, and the resulting formula is $\psi = \Know_i at(i, r_1)  \land \Know_j at(j, r_4) \land \CKnow^1 [\bigwedge_{n = 1}^3 (at'(i, r_{n + 1}) \equiv at(i, r_{n}))]$;
			
			\item Forget $\set{at(i, r_2), at(i, r_3), at(i, r_4)}$ in $\psi$, and get the formula $\Know_i at'(i, r_2) \land \Know_j at(j, r_4)$;
			
			\item Substitute $at'(i, r_2)$ with $at(i, r_2)$, which leads to $\phi' = \\ \Know_i at(i, r_2) \land \Know_j at(j, r_4)$.
		\end{enumerate}
		After agent $i$ moves right, she knows that her position is $r_2$.
		
		\looseness=-1
		Then, the progression of $\phi'$ w.r.t. $sense(i, b_1, r_2)$ contains the two KBs: $\Know_i [at(i, r_2) \land in(b_1, r_2)] \land \Know_j at(j, r_4)$ and $\Know_i [at(i, r_2) \land \neg in(b_1, r_2)] \land \Know_j at(j, r_4)$.
		These two formulas together means that agent $i$ knows whether $b_2$ is in $r_2$. 
	\end{example}
	
	\looseness=-1
	Finally, the task of MAEP is finding an action tree, whose branches on sensing results of epistemic actions and guarantees goal achievement after executing any path of actions.
	Besides progression, another major computation effort lies in the reasoning to decide if the current KB entails the goal formula and the preconditions of actions.
	By Proposition \ref{prop:kdnfCElang}, it follows that the entailment check is tractable if the current KB is in $\KDNF_{\sublangprop}$ and both the goal formula and the preconditions are in $\KCNF_{\sublangpropP}$, where $\sublangprop$ and $\sublangpropP$ are dual, and $\sublangprop$ satisfies $\CO$ and $\wedgeBC$.
	Since both progression and entailment check are tractable, the whole planning process can be done effectively.	

	\section{Extension to $\KFVn$}
	
	\begin{table}
		\vspace*{-3mm}
		\small
		\centering
		\caption{Succinctness of normal forms in $\KFVn$}
		\label{tab:sucK45n}
		\begin{tabular}{| c | c | c | c |}
			\hline
			$\lang$                 & $\KFVDNF_{\sublangpropP}$ & $\KFVCNF_{\sublangpropP}$ & $\ACDNF$ \\ \hline \hline
			$\KFVDNF_{\sublangprop}$ & $\leq^*$               & $\nleq$              & $\nleq$\\ \hline
			$\KFVCNF_{\sublangprop}$ & $\nleq$                & $\leq^*$              & $\nleq$ \\ \hline
			$\ACDNF$                 & $\nleq$                & $\nleq$                & $\leq$  \\ \hline
		\end{tabular}			
		\vspace*{-3mm}
	\end{table}
	
	\begin{table*}
		\vspace*{-3mm}
		\scriptsize
		\centering
		\caption{Queries and transformations for normal forms in $\KFVn$}
		\label{tab:queryAndTransK45n}
		\begin{tabular}{| c | c | c | c | c | c | c | c | c | c | c | c | c | c | c | c | c | c |}
			\hline
			$\lang$              & $\CO$ & $\VA$ & $\SE$ & $\EQ$ & $\CE$ & $\ACE_{\sublangpropP}$ & $\IM$ & $\AIM_{\sublangpropP}$ & $\negC$ & $\wedgeC$ & $\wedgeBC$ & $\veeC$ & $\veeBC$ & $\CD$ & $\FO$ & $\SFO$  \\ \hline  \hline
			$\KFVDNF_{\sublangprop}$  & $\CO$          & $\circ$        & $\circ$        & $\circ$        & $\circ$        & $\CO, \wedgeBC$ & $\circ$        & $\circ$  & \xmark       & \xmark        & $\wedgeBC$          & \checkmark       & \checkmark        & $\CD$       & $\FO$       & $\SFO$                      \\ \hline
			$\KFVCNF_{\sublangprop}$  & $\circ$        & $\VA$          & $\circ$        & $\circ$        & $\circ$        &  $\circ$ & $\circ$        & $\VA, \veeBC$ & \xmark       & \checkmark         & \checkmark          & \xmark       & $\veeBC$          & $\CD$       & $\circ$     & ?            \\ \hline	
			$\ACDNF$                    & \checkmark          & $\circ$        & $\circ$        & $\circ$        & $\circ$        & \checkmark & $\circ$        & $\circ$  & \xmark       & \xmark         & \checkmark          & \checkmark       & \checkmark        & \checkmark       & \checkmark       & \checkmark                      \\ \hline		
		\end{tabular}
		
		
		\vspace*{-3mm}
	\end{table*}
	
	\looseness=-1
	In the area of philosophy, it is ideal to assume that each agent has introspection about her own knowledge.
	This assumption can be captured by positive and negative introspection axioms $\axiomF$ ($\Know_i \phi \rightarrow \Know_i \Know_i \phi$) and $\axiomV$ ($\dualKnow_i \phi \rightarrow \Know_i \dualKnow_i \phi$).
	The former says that if agent $i$ knows $\phi$, then she know that she believes $\phi$, while the latter means that if agent $i$ does not know $\phi$, then she knows that she does not know $\phi$.
	A normal form for $\KFVn$ paves the way for developing normal forms for other logics, \eg, $\KDFVn$, which is more challenging to develop.
	In addition, it is better to use a logic with axioms $\axiomF$ and $\axiomV$ as the logical framework for MAEP.
	
	\looseness=-1
	In this section, we are concerned about the epistemic logic $\KFVn$, that contains both of positive and negative introspection axioms.
	The entailment and equivalence relations between formulas considered here are under $\KFVn$.
	
	\looseness=-1
	It is non-trivial to extend our proposed results for $\Kn$ to $\KFVn$ since the definition of separability-based term cannot be applied in $\KFVn$.
	We show this in an illustrative example.
	
	\begin{example}
		Suppose that $\phi = \dualKnow_i (p \land \Know_i \neg p)$.
		According to Algorithm \ref{alg:satKdnf}, $\phi$ is satisfiable in $\Kn$.
		However, it is not the case in $\KFVn$ since $\phi$ implies that $\dualKnow_i (p \land \neg p)$, which is equivalent to $\false$.
		This is due to the additional axioms $\axiomF$ and $\axiomV$.
	\end{example}
	
	\looseness=-1
	From the above example, we know that, in $\KFVn$, there exist logical entanglements between two propositional formulas on different depth of formulas.
	Hence, the crux is that some separability-based terms are logically inseparable in $\KFVn$.
	To achieve logical separability, we need to prohibits any consecutive occurrence of epistemic operators of the same agent.
	
	\begin{definition} \rm
		A formula has \textit{the alternating agent operator property} if no episteimc operators of an agent directly occur inside those of the same agent.
	\end{definition}
	
	\looseness=-1
	We say $\phi$ is an \textit{alternating separability-based term ($\KFVTE_{\sublangprop}$)}, if it is an $\KTE_{\sublangprop}$ with the alternating agent operator property.
	Similarly, we can define the following notions: \textit{alternating separability-based clause ($\KFVCL$), $\DNF$ ($\KFVDNF$), $\CNF$ ($\KFVCNF$) and cover $\DNF$ ($\ACDNF$)}.
	For example, the formula $\Know_i \dualKnow_i p$ is not an $\KFVTE$ since $\dualKnow_i$ occurs directly within the $\Know_i$ operator.
	But the formula $\Know_i \dualKnow_j \Know_i p$ is an $\KFVTE$ since there is a $\dualKnow_j$ operator inbetween two $\Know_i$ operators.
	
	%
	
	%
	
	\looseness=-1
	We remark that all results regarding succinctness, queries and transformations, stated in Sections 4 and 5, also hold for $\KFVDNF$, $\KFVCNF$ and $\ACDNF$ in the logic $\KFVn$ just as $\KDNF$, $\KCNF$ and $\CDNF$ in $\Kn$ except 
	transforming into $\KFVDNF$ or $\KFVCNF$ causes an at most double exponential in the size of the original formula.
	
	\begin{proposition} \label{prop:kfvdnfTrans}
		In $\KFVn$, every formula in $\langkn$ is equivalent to a formula in $\KFVDNF_{\sublangprop}$ or $\KFVCNF_{\sublangprop}$ that is at most double-exponentially large in the size of the original formula.
	\end{proposition}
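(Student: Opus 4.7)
The plan is to construct the $\KFVDNF_{\sublangprop}$-compilation in two successive stages, each incurring at most one exponential blow-up in size; the $\KFVCNF_{\sublangprop}$ case is handled dually.

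Stage 1 invokes Proposition \ref{prop:kdnfCnfTransUpperBound} to compile $\phi$ into an equivalent $\KDNF_{\sublangprop}$-formula $\phi^*$ of size $M \leq 2^{c_1 \len[\phi]}$. Since $\KFVn$ extends $\Kn$, the equivalence still holds in $\KFVn$. However, $\phi^*$ may violate the alternating-agent-operator property: a top-level $\Know_i$- or $\dualKnow_i$-literal inside some $\KTE$-disjunct of $\beta_i$ may sit directly inside the enclosing $\Know_i \beta_i$ (and likewise for $\dualKnow_i \gamma_{ij}$).

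Stage 2 alternate-normalises $\phi^*$ top-down by case analysis on top-level same-agent literals. In any $\KFVn$-model, the transitivity and Euclideanness of $R_i$ imply that for any $i$-literal $L$ (i.e., $L = \Know_i \psi$ or $L = \dualKnow_i \psi$) and any formula $\beta$, the truth value of $L$ is constant on $R_i(s)$, yielding the key identity
\[
\Know_i \beta \;\equiv\; \Know_i \false \;\lor\; \bigl(L \land \Know_i \beta[\true/L]\bigr) \;\lor\; \bigl(\neg L \land \Know_i \beta[\false/L]\bigr),
\]
and the dual identity for $\dualKnow_i \beta$. Iterating this over every top-level $i$-literal $L_1, \ldots, L_k$ of $\beta$, the literal $\Know_i \beta$ becomes a disjunction of $2^k + 1$ terms, each containing $\Know_i$ applied to a substituted formula with no top-level $i$-literals. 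Recursion into those substituted subformulas removes nested violations at deeper levels, and a final re-flattening (using $\Know_i \psi_1 \land \Know_i \psi_2 \equiv \Know_i (\psi_1 \land \psi_2)$ to merge multiple $\Know_i$-literals inside a single $\KTE$) returns the result to $\KDNF_{\sublangprop}$-shape, yielding a formula in $\KFVDNF_{\sublangprop}$.

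The main obstacle is bounding the cumulative size blow-up. At each of the at most $\len[\phi]$ modal-depth levels, the case analysis introduces a factor of at most $2^M$ (since there are at most $M$ top-level $i$-literals per level). Crucially, the substituted subformulas never grow beyond their original size $M$, so the per-level factor stays $2^M$ rather than becoming tower-exponential in the running size. Multiplying the per-level factors across all $\len[\phi]$ levels gives a total size of $2^{O(M \cdot \len[\phi])} = 2^{2^{O(\len[\phi])}}$, i.e., double-exponential in $\len[\phi]$ as required.
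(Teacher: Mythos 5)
Your proof is essentially sound but organizes the construction in the opposite order from the paper, and the mechanism for enforcing alternation is different. The paper first puts $\phi$ into NNF, then removes same-agent nesting \emph{before} any DNF-shaping, using four local distribution equivalences such as $\Know_i (\phi \lor (\Know_i \psi \land \eta)) \leftrightarrow (\Know_i \phi \lor \Know_i \psi) \land \Know_i(\phi \lor \eta)$ (each a single-exponential step), and only then runs the recursive $\KDNF$ conversion of Proposition \ref{prop:kdnfCnfTransUpperBound}. You instead run the $\KDNF$ conversion first and then eliminate same-agent nesting by a Shannon-style expansion on top-level $i$-literals. Both methods ultimately rest on the same semantic fact --- in a $\KFVn$-model the truth value of an $i$-literal is rigid across $R_i(s)$ and agrees with its value at $s$ --- and both yield the double-exponential bound, so your key identity is correct (including the $\Know_i \false$ disjunct for the non-serial case). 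What your ordering buys is a conceptually uniform case analysis; what the paper's ordering buys is that it never has to negate anything, since the rewriting happens on NNF input.

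That last point is the one place where your write-up is genuinely incomplete rather than just differently organized. The branches $\neg L \land \Know_i \beta[\false/L]$ introduce negations of $\KDNF_{\sublangprop}$ subformulas ($\neg\Know_i\psi = \dualKnow_i\neg\psi$ with $\psi$ already in $\KDNF_{\sublangprop}$), and $\KDNF_{\sublangprop}$ does not admit polytime negation (Proposition \ref{prop:kdnfWedgeCNegC}); each such $\neg\psi$ must be recompiled into $\KDNF_{\sublangprop}$ at up to single-exponential cost in $|\psi| \leq M$. This falsifies your claim that the material in each disjunct ``never grows beyond $M$'' --- the literal conjuncts can grow to $2^{O(M)}$ --- although the final bound survives, since $2^M$ disjuncts each of size $2^{O(M)}$, iterated over $\dep[\phi] \leq \len[\phi]$ levels, still gives $2^{O(M\cdot\len[\phi])}$, which is double-exponential. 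You should also say how the re-flattening restores the $\KTE_{\sublangprop}$ shape: merging the positive $i$-literals via $\Know_i\psi_1 \land \Know_i\psi_2 \equiv \Know_i(\psi_1\land\psi_2)$ requires redistributing into a disjunction of terms, and condition (3) of Definition \ref{def:kterm} ($\gamma_{ij}\models\beta_i$) must be re-established by conjoining the new $\beta_i$ into each $\gamma_{ij}$. None of this is fatal, but it needs to be folded into the size accounting for the argument to be complete.
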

	\begin{proof}
		We only consider $\KFVDNF_{\sublangprop}$ as the case of $\KFVCNF_{\sublangprop}$ can be similarly proven.
		
		The transformation is similar to that illustrated in Proposition \ref{prop:kdnfCnfTransUpperBound}.
		It contains one more step to ensure the alternating agent operator property.
		Let $\phi \in \langkn$.
		The transformation consists of three steps:
		(1) Put $\phi$ into an equivalent NNF formula $\phi'$;
		(2) Obtain the formula $\phi''$ with alternating agent operator property;
		(3) Get the formula $\phi^*$ by recursively transforming $\phi''$ into $\KFVDNF_{\sublangprop}$ by induction on $\dep[\phi'']$.
		
		The details of Steps 1 and 3 are shown in Proposition \ref{prop:kdnfCnfTransUpperBound}.
		Step 2 strips out any occurrence of consecutive epistemic operators with the same agent via the following
		equivalences in $\KFVn$:
		\begin{enumerate} \dense
			\item $\Know_i (\phi \lor (\Know_i \psi \land \eta)) \leftrightarrow (\Know_i \phi \lor \Know_i \psi) \land \Know_i(\phi \lor \eta)$;
			\item $\Know_i (\phi \lor (\dualKnow_i \psi \land \eta)) \leftrightarrow (\Know_i \phi \lor \dualKnow_i \psi) \land \Know_i(\phi \lor \eta)$;
			\item $\Know_i (\phi \land (\Know_i \psi \lor \eta)) \leftrightarrow \Know_i \phi \land (\Know_i \psi \lor \Know_i \eta)$;
			\item $\Know_i (\phi \land (\dualKnow_i \psi \lor \eta)) \leftrightarrow \Know_i \phi \land (\Know_i \eta \lor \dualKnow_i \psi)$.
		\end{enumerate}
		Step 3 preserves the alternating agent operator property, and thus the resulting formula is in $\KFVDNF_{\sublangprop}$.		
		
		Finally, we analyze the complexity of this transformation.
		Recall that the complexity analysis in Proposition \ref{prop:kdnfCnfTransUpperBound}, Step 1 leads to the formula $\phi'$ with the size at most $2 \len[\phi]$, and Step 3 causes an at most single-exponential blowup.
		In addition, Step 2 generates the formula $\phi''$ with size at most single-exponential in $\len[\phi']$.
		In summary, the whole conversion causes an at most double-exponential blowup.
	\end{proof}

	\looseness=-1
	Although the aforementioned transformation for arbitrary formulas may cause a double-exponential blowup, its complexity falls into single-exponential if we require that the original formula has the alternating agent operator property.
	
	\looseness=-1
	Secondly, we slightly adjust the definition of polytime tests for restricted clausal entailment ($\ACE_{\sublangprop}$) and implicant ($\AIM_{\sublangprop}$) by using $\KFVTE$ and $\KFVCL$ instead of $\KTE$ and $\KCL$ respectively.
	Similar to Proposition \ref{prop:kdnfCElang}, if $\sublangprop$ and $\sublangpropP$ are dual, then $\KFVDNF_{\sublangprop}$ and $\KFVDNF_{\sublangprop}$ satisfies $\KFVCL_{\sublangpropP}$ and $\KFVTE_{\sublangpropP}$ respectively.

	\looseness=-1
	Finally, the results regarding succinctness, queries and transformations of normal forms for $\KFVn$ are summarized in Tables 3 and 4.
	\begin{theorem}
		The result in Tables 3 and 4 hold.
	\end{theorem}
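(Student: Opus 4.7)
The plan is to transport the arguments of Sections~4 and~5 wholesale into the $\KFVn$ setting, using the alternating agent operator property as the substitute for the fact that, in $\Kn$, no additional logical entanglements between nested same-agent operators can arise. First I would re-examine Definition~\ref{def:logSepTerm} in the $\KFVn$ context and prove an analog of Lemma~\ref{lem:logSepTerm} restricted to $\KFVTE$: namely, a logically separable $\KFVTE$ continues to satisfy the three consequence-decomposition clauses under $\KFVn$-entailment. The reason this should work is that an $\KFVTE$ contains no direct nesting of same-agent operators, so the axioms $\axiomF$ and $\axiomV$ cannot be applied to synthesize new consequences across epistemic levels, and therefore the problematic example $\dualKnow_i(p \land \Know_i \neg p)$ is excluded by construction. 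With this in hand, Propositions~\ref{prop:logsepSat}--\ref{prop:logsepSmallest} transfer verbatim, and I would re-establish them with a model construction that builds a transitive-euclidean Kripke model by taking witness models for each conjunct separately and then closing the accessibility relations, using the alternating property to guarantee that the closure introduces no new constraints on the propositional parts.

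For Table~3, I would derive each succinctness entry from its $\Kn$ counterpart in Table~\ref{tab:suc} together with Proposition~\ref{prop:kfvdnfTrans}. The diagonal entries $\leq^*$ follow exactly as in Proposition~\ref{prop:kdnfSucc} because the inductive mapping $t'$ preserves the alternating property. For the $\nleq$ entries, I would reuse the separating families from Section~4 after a light modification: each formula family used there (e.g., the one in the proof of $\CDNF \nleq \KDNF_{\sublangprop}$) can be rewritten so that it already has the alternating agent operator property by inserting a distinct agent $j$'s trivial operator between successive occurrences of agent $i$, which does not change the asymptotic size or the blow-up lower bound. The comparisons with $\ACDNF$ follow by noting that $\ACDNF \subseteq \KFVDNF_{\TE}$ (the analogue of Proposition~\ref{prop:subsetLang}), combined with the $\KFVn$ analogue of Proposition~\ref{prop:kdnfTransLowBound}.

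For Table~4, I would lift each proof in Section~5 by checking that every algorithm preserves the alternating agent operator property. The satisfiability algorithm (Algorithm~\ref{alg:satKdnf}) already respects alternation because it merely descends through the structure. The bounded conjunction procedure of Proposition~\ref{prop:kdnfWedgeBC} requires verification: when we build $\psi''$ by conjoining $\psi$ and $\psi'$, the components $\beta''_i$, $\gamma''_{ij}$, $\gamma^*_{ij}$ are produced by recursive bounded conjunction on subformulas, and since both inputs are alternating, their recursive conjunction stays alternating provided we re-apply the equivalences used in Step~2 of Proposition~\ref{prop:kfvdnfTrans} only when needed on the topmost layer — crucially, under the inductive assumption the children are already alternating, so no rewriting is triggered. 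The forgetting and conditioning transformations preserve alternation straightforwardly, since they act on propositional leaves only. The negative entries ($\circ$ and $\xmark$) carry over from Table~\ref{tab:ldnfcnfKC} by observing that any $\Kn$ witness for intractability is itself an $\KFVn$-valid instance (the axioms only strengthen the logic), so hardness is inherited.

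The main obstacle will be the refined analog of Lemma~\ref{lem:logSepTerm} for $\KFVn$: one has to show that when an alternating separability-based term $\phi$ entails a positive epistemic literal $\Know_i \beta'$, some $\beta \in \BoxSub[i](\phi)$ already satisfies $\beta \models \beta'$ under $\KFVn$. The delicate point is that $\axiomV$ lets $\dualKnow_i \gamma$ generate $\Know_i \dualKnow_i \gamma$, which could in principle contribute to establishing $\Know_i \beta'$; the alternating property prevents $\gamma$ itself from beginning with a same-agent operator, but one must still argue that the $\KFVn$-closure of the conjunct $\Know_i \beta_i$ dominates these derived consequences on formulas that do not start with agent~$i$'s operator. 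I expect this to require constructing a countermodel by selectively merging worlds in a disjoint union of witnesses for the individual conjuncts, respecting the transitive-euclidean closure, and showing that the merging only identifies worlds where the relevant subformulas already agree.
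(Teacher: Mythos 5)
You should first note that the paper supplies no proof of this theorem at all: it is asserted on the strength of the preceding remark that the Section 4--5 arguments ``also hold'' for $\KFVDNF$, $\KFVCNF$ and $\ACDNF$. Your proposal is therefore a faithful elaboration of the paper's intended strategy rather than a divergence from it, and much of it is sound: lifting Algorithm~\ref{alg:satKdnf}, the bounded-conjunction construction of Proposition~\ref{prop:kdnfWedgeBC} (whose components $\beta''_i,\gamma''_{ij},\gamma^*_{ij}$ indeed stay alternating, since none of the inputs can begin with an agent-$i$ operator), conditioning, forgetting, and the hardness witnesses --- though the latter transfer only because the specific reductions use formulas of depth at most one over propositional matrices, on which $\Kn$- and $\KFVn$-entailment coincide; your blanket claim that hardness is inherited ``because the axioms only strengthen the logic'' is not a valid principle in general.

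Two genuine gaps remain. First, your derivation of the $\ACDNF$ column of Table~3 from $\ACDNF \subseteq \KFVDNF_{\TE}$ (the analogue of Proposition~\ref{prop:subsetLang}) establishes $\KFVDNF_{\sublangprop} \leq \ACDNF$, but Table~3 records $\KFVDNF_{\sublangprop} \nleq \ACDNF$ --- the opposite of the corresponding entry in Table~\ref{tab:suc}. As written, your argument and the entry you are supposed to prove contradict each other, and you must either explain why the containment argument breaks in $\KFVn$ or concede that this cell cannot be obtained by your route. Second, the $\KFVn$ analogue of Lemma~\ref{lem:logSepTerm} cannot hold as literally stated: axiom $\axiomV$ gives $\dualKnow_i \gamma \models \Know_i \dualKnow_i \gamma$ while $\BoxSub[i](\dualKnow_i \gamma)$ is empty, so clause 2 fails for unrestricted consequents $\Know_i \beta'$. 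You correctly flag this as the delicate point, but the remedy is not only a more careful countermodel: the statement of the lemma (and of logical separability itself) must restrict the entailed basic formulas to ones with the alternating agent operator property, which is exactly why the paper replaces $\KCL$ and $\KTE$ by $\KFVCL$ and $\KFVTE$ in the definitions of $\ACE_{\sublangpropP}$ and $\AIM_{\sublangpropP}$. Until that restricted lemma is actually proved, the $\CO$, $\ACE_{\sublangpropP}$ and $\FO$ entries of Table~4 are not established.
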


\section{Conclusions}
\looseness=-1
We have introduced a notion of logical separability for epistemic terms, which is a key property to guarantee that the satisfiability check and forgetting can be computed in modular way.
Based on the logical separability, we have defined a normal form $\KDNF$ for the multi-agent epistemic
logic $\Kn$, which can be seen as a generalization of the well-known propositional normal form $\DNF$.
As a dual to $\KDNF$, we can define the $\KCNF$ for $\Kn$.
More importantly, we have constructed a knowledge compilation map on four normal forms $\KDNF$, $\KCNF$, $\CDNF$ and $\PINF$ in terms of their succinctness, queries and transformations.
Interestingly, bounded conjunction, forgetting and restricted clausal entailment check are all tractable for $\KDNF_{\sublangprop}$ formulas under some restrictions on $\sublangprop$.
These three properties are crucial to effective implementations of MAEP.
Although $\KDNF$ and $\CDNF$ admit tractability for certain kind of entailments, $\KDNF$ is a better choice of the target compilation language for MAEP since the former is strictly more succinct than the latter.
Finally, by resorting to the alternating agent operator property, we extend our results to the epistemic logic $\KFVn$.

\looseness=-1
In future work, we plan to implement an effective multi-agent epistemic planner based on $\KDNF$.
It is also interesting to identify tractable normal forms in other multi-agent epistemic logics, \eg, $\KDFVn$ and $\SVn$.
Since the description logic $\ALC$ is highly-related to $\Kn$, the results proposed in this paper is also applicable to $\ALC$.
Another direction is to investigate knowledge compilation in more expressive description logics.

	\bibliographystyle{aaai}
	\bibliography{KR-2018}

\end{document}